\newtheorem{assumption}{Assumption}
\newtheorem{remark}{Remark}
\newtheorem{lemma}{Lemma}
\newtheorem{theorem}{Theorem}
\newtheorem{definition}{Definition}
\newcounter{mymalgo}
\renewcommand{\themymalgo}{\arabic{mymalgo}}
\title{LDC-MTL: Balancing Multi-Task Learning through Scalable Loss Discrepancy Control}
\author{%
   Peiyao Xiao\thanks{Peiyao Xiao and Kaiyi Ji are with the Department of Computer Science and Engineering, University at Buffalo, Buffalo, NY 14228 USA (e-mail: \href{mailto:peiyaoxi@buffalo.edu}{peiyaoxi@buffalo.edu}, \href{mailto:kaiyiji@buffalo.edu}{kaiyiji@buffalo.edu}).} 
   \quad
    Chaosheng Dong\thanks{Chaosheng Dong is with Amazon.com Inc, Seattle, WA, 98109 USA (e-mail: \href{mailto:chaosd@amazon.com}{chaosd@amazon.com}).}
   \quad
    Shaofeng Zou\thanks{Shaofeng Zou is with the School of Electrical, Computer and Energy Engineering, Arizona State University, Tempe, AZ 85281 USA (e-mail: \href{mailto:zou@asu.edu}{zou@asu.edu}). }  
   \quad
   Kaiyi Ji\footnotemark[1] \thanks{ Correspondence to: Kaiyi Ji (\href{mailto:kaiyiji@buffalo.edu}{kaiyiji@buffalo.edu})}
}
\begin{document}
\maketitle







\begin{abstract}
Multi-task learning (MTL) has been widely adopted for its ability to simultaneously learn multiple tasks. While existing gradient manipulation methods often yield more balanced solutions than simple scalarization-based approaches, they typically incur a significant computational overhead of $\mathcal{O}(K)$ in both time and memory, where $K$ is the number of tasks. In this paper, we propose LDC-MTL, a simple and scalable loss discrepancy control approach for MTL, formulated from a bilevel optimization perspective. Our method incorporates two key components: (i) a bilevel formulation for fine-grained loss discrepancy control, and (ii) a scalable first-order bilevel algorithm that requires only $\mathcal{O}(1)$ time and memory. Theoretically, we prove that LDC-MTL guarantees convergence not only to a stationary point of the bilevel problem with loss discrepancy control but also to an $\epsilon$-accurate Pareto stationary point for all $K$ loss functions under mild conditions. Extensive experiments on diverse multi-task datasets demonstrate the superior performance of LDC-MTL in both accuracy and efficiency. 
\end{abstract}

\begin{figure*}[t]
\centering
\includegraphics[width=\linewidth]{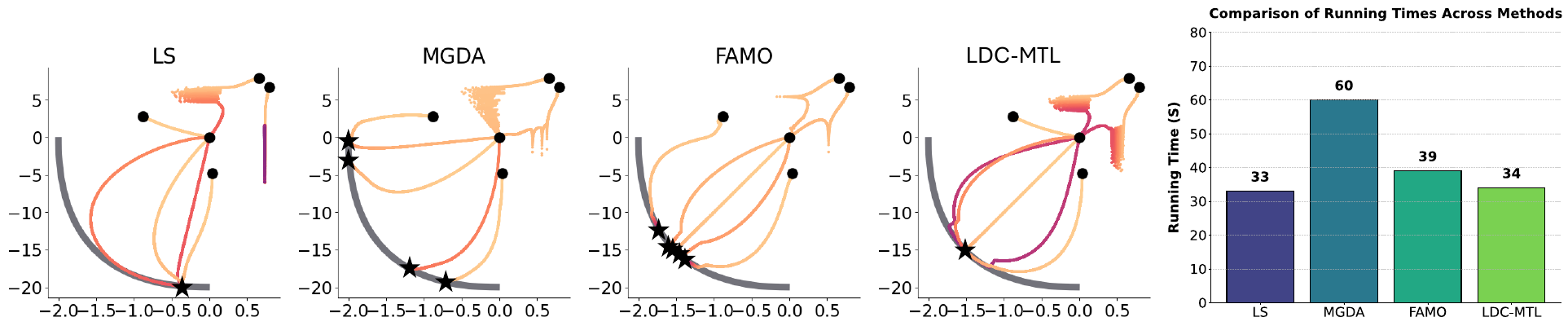}
\caption{The loss trajectories of a toy 2-task learning problem from \cite{liu2024famo} and the runtime comparison of different MTL methods for 50000 steps. Stars on the Pareto front denote the convergence points. Although FAMO \citep{liu2024famo} achieves more balanced results than Linear Scalarization (LS) and MGDA \citep{desideri2012multiple}, it converges to different points on the Pareto front. LDC-MTL reaches the same balanced point with a computational cost comparable to the LS. Full experimental details can be found in \Cref{app:toy}.}\label{fig:toy}
\vspace{-0.3cm}
\end{figure*}
\section{Introduction}
In recent years, Multi-Task Learning (MTL) has received increasing attention for its ability to predict multiple tasks simultaneously using a single model, thereby reducing computational overhead. This versatility has enabled a wide range of applications, including autonomous driving \citep{chen2018multi}, recommendation systems \citep{wang2020m2grl}, and natural language processing \citep{zhang2022survey}.

One of the main challenges in MTL is the imbalance and discrepancy in task losses, where different tasks progress at uneven rates during training. This discrepancy stems from several sources: variations in loss magnitudes due to differing units or scales (e.g., meters vs. millimeters) \citep{kendall2018multi,liu2019end}, heterogeneity in task types (e.g., regression vs. classification) \citep{dai2023improvable,lin2021reasonable}, and conflicting gradient directions across tasks \citep{yu2020gradient,liu2021conflict}.  When unaddressed, such discrepancies may cause certain tasks to dominate the optimization trajectory, ultimately leading to degraded performance on others.

To mitigate this issue, MTL research generally follows two main paradigms. The first is the class of scalarization-based methods, which transform MTL into a single-objective optimization problem by aggregating task losses, typically through weighted or averaged sums. Early works adopted static weighting schemes for their simplicity and scalability \citep{caruana1997multitask}, but these often led to degraded multi-task performance relative to single-task baselines, largely due to persistent gradient conflicts under fixed weights \citep{xiao2024direction}. As a remedy, more recent approaches explore dynamic loss weighting strategies that adapt during training \citep{kendall2018multi,liu2019end,lin2021reasonable,dai2023improvable}. However, these methods do not explicitly address loss discrepancy, allowing task interference to persist and leading to imbalanced performance across tasks. 
 The second line of work involves gradient manipulation techniques, which aim to promote balanced optimization by explicitly resolving gradient conflicts. These approaches seek update directions that are more equitable across tasks \citep{desideri2012multiple,liu2021conflict,ban2024fair,navon2022multi,yu2020gradient,fernando2023mitigating,xiao2024direction}. While often effective in reducing task interference, they typically require computing and storing gradients from all $K$ tasks at each iteration, incurring $O(K)$ time and memory costs. This scalability bottleneck poses challenges for large-scale MTL scenarios involving deep architectures and massive datasets.



In this paper, we propose a simple and scalable loss discrepancy control approach for MTL from a novel bilevel optimization perspective. Our approach comprises two key components: a bilevel formulation for fine-grained loss discrepancy control, and a scalable first-order bilevel algorithmic design. Our specific contributions are summarized as follows. 

\begin{list}{$\bullet$}{\topsep=0.3ex \leftmargin=0.15in \rightmargin=0.in \itemsep =0.022in}

\item {\bf Bilevel formulation for loss discrepancy control.} At the core of our bilevel formulation, the lower-level problem optimizes the model parameters by minimizing a weighted sum of individual loss functions. Meanwhile, the upper-level problem adjusts these weights to minimize the discrepancies among the loss functions, ensuring balanced learning across tasks. 

\item {\bf Efficient algorithms with   $O(1)$ time and memory cost.} We develop Loss Discrepancy Control for Multi-Task Learning (LDC-MTL), a highly efficient algorithm tailored to solve the proposed bilevel problem with loss discrepancy control. Unlike traditional bilevel methods, LDC-MTL has a fully single-loop structure without second-order gradient computations, resulting in an overall $\mathcal{O}(1)$ time and memory complexity. The $2$-task toy example in \Cref{fig:toy} illustrates that our LDC-MTL method achieves a more balanced solution compared to other competitive approaches while maintaining superior computational efficiency.

\item {\bf Empirical performance. } Extensive experiments show that our proposed LDC-MTL method outperforms a wide range of scalarization-based and gradient manipulation approaches across multiple supervised multi-task datasets, including QM9 \citep{ramakrishnan2014quantum}, CelebA \citep{liu2015deep}, and Cityscapes \citep{cordts2016cityscapes}, while also demonstrating superior efficiency and scalability. 
\item {\bf Experimental analysis.} We conduct a deeper exploration showing that task losses under LDC-MTL become more concentrated and consistently lower. As a side effect of controlling loss discrepancies, gradient conflicts are also reduced. In addition, when compared to weight-swept linear scalarization (LS), LDC-MTL solutions align with the Pareto frontier (PF) and achieve consistently better results. Our experiments also suggest that the dynamic training trajectory, rather than just the final model parameters, plays a key role in the strong performance of our method. Overall, these results highlight the importance of dynamic loss discrepancy control in LDC-MTL.


 \item {\bf Theoretical guarantees.} Theoretically, we show that LDC-MTL guarantees convergence not only to a stationary point of the bilevel problem with loss discrepancy control but also to an $\epsilon$-accurate Pareto stationary point for all 
$K$ individual loss functions under suitable conditions.

\end{list}

\section{Related Works}
\textbf{Multi-task learning.} MTL has recently garnered significant attention in practical applications. One line of research focuses on model architecture, specifically designing various sharing mechanisms \citep{kokkinos2017ubernet, ruder2019latent}. Another direction addresses the mismatch in loss magnitudes across tasks, proposing methods to balance them. For example, \cite{kendall2018multi} balanced tasks by weighting loss functions based on homoscedastic uncertainties, while \cite{liu2019end} dynamically adjusted weights by considering the rate of change in loss values for each task. 

Besides, one prominent approach frames MTL as a Multi-Objective Optimization (MOO) problem. \cite{sener2018multi} introduced this perspective in deep learning, inspiring methods based on the Multi-Gradient Descent Algorithm (MGDA) \citep{desideri2012multiple}. Subsequent work has aimed to address gradient conflicts. For instance, PCGrad \citep{yu2020gradient} resolves conflicts by projecting gradients onto the normal plane, GradDrop \citep{chen2020just} randomly drops conflicting gradients, and CAGrad \citep{liu2021conflict} constrains update directions to balance gradients. Additionally, Nash-MTL \citep{navon2022multi} formulates MTL as a bargaining game among tasks, while FairGrad \citep{ban2024fair} incorporates $\alpha$-fairness into gradient adjustments. \cite{achituve2024bayesian} introduces a novel gradient aggregation approach using Bayesian inference to reduce the running time. 

Prior works~\citep{kurin2022defense, xin2022current} have shown that several gradient-based MTL methods fail to outperform linear scalarization (LS) with weight sweeping. In contrast, our extensive experiments demonstrate that our method yields solutions that dominate those obtained by weight-swept LS in \Cref{fig:PF+weightchange}. On the theoretical side, \cite{zhou2022convergence} analyzed the convergence properties of stochastic MGDA, and \cite{fernando2023mitigating} proposed a method to reduce bias in the stochastic MGDA with theoretical guarantees. More recent advancements include a double-sampling strategy with provable guarantees introduced by \cite {xiao2024direction} and \cite {chen2024three}.

\textbf{Bilevel optimization.} 
Bilevel optimization, first introduced by \cite{bracken1973mathematical}, has been extensively studied over the past few decades. Early research primarily treated it as a constrained optimization problem \citep{hansen1992new, shi2005extended}. More recently, gradient-based methods have gained prominence due to their effectiveness in machine learning applications. Many of these approaches approximate the hypergradient using either linear systems \citep{domke2012generic, ji2021bilevel} or automatic differentiation techniques \citep{maclaurin2015gradient, franceschi2017forward}. However, these methods become impractical in large-scale settings due to their significant computational cost \citep{xiao2023communication, yang2024simfbo}. 
The primary challenge lies in the high cost of gradient computation: approximating the Hessian-inverse vector requires multiple first- and second-order gradient evaluations, and the nested sub-loops exacerbate this inefficiency. To address these limitations, recent studies have focused on reducing the computational burden of second-order gradients. For example, some methods reformulate the lower-level problem using value-function-based constraints and solve the corresponding Lagrangian formulation \citep{kwon2023fully, yang2024tuning}. The work studies
convex bilevel problems and proposes a zeroth-order optimization method with finite-time
convergence to the Goldstein stationary point \citep{chen2023bilevel}. In this work, we propose a simplified first-order bilevel method for MTL, motivated by intriguing empirical findings.


\vspace{-0.2cm}
\section{Preliminary}\label{sec:preliminary}
\vspace{-0.1cm}

\textbf{Scalarization-based methods.} MTL aims to optimize multiple tasks (objectives) simultaneously with a single model. The straightforward approach is to optimize a weighted summation of all loss functions:
$\min_x L_{total}(x) = \sum_{i=1}^K w_il_i(x),$
where $x\in\mathbb{R}^d$ denotes the model parameter, $l_i(x): \mathbb{R}^d\rightarrow\mathbb{R}_{\geq0}$ represents the loss function of the $i$-th task and $K$ is the number of tasks. 
This approach faces three key challenges: 1) loss values could differ in scale, 2) fixed weights can lead to significant gradient conflicts, potentially allowing one task to dominate the learning process \citep{xiao2024direction, wang2024finite}; and 3) the overall performance is highly sensitive to the weighting of different losses \citep{kendall2018multi}. Consequently, such methods often struggle with performance imbalances across tasks. 

\textbf{Gradient manipulation methods.} To mitigate gradient conflicts, gradient manipulation methods dynamically compute an update $d^t$ at each epoch to balance progress across tasks, where  $t$  is the epoch index. The update $d^t$ is typically a convex combination of task gradients, expressed as:
\begin{align*}
d^t = G(x^t)w^t, \;\; \text{where} \; w^t = h(G(x^t)),
\end{align*}
with 
$G(x^t) = [\nabla l_1(x^t), \nabla l_2(x^t), \dots, \nabla l_K(x^t)]^\top.$
The weight vector  $w^t$ is determined by a function $ h(\cdot): \mathbb{R}^{K \times d} \rightarrow \mathbb{R}^K$, which varies depending on the specific method. 
However, these methods often require computing and storing the gradients of all $K$ tasks during each epoch, making them less scalable and resource-intensive, particularly in large-scale scenarios. 
Therefore, it is necessary to develop lightweight methods that achieve balanced performance.



\textbf{Pareto concepts.} Solving the MTL problem is challenging because it is difficult to identify a common $x$ that achieves the optima for all tasks. Instead, a widely accepted target is finding a Pareto stationary point. Suppose we have two points $x_1$ and $x_2$. It is claimed that $x_1$ dominates $x_2$ if $l_i(x_1)\leq l_i(x_2) \;\forall i\in[K]$, and $\exists j\;l_j(x_1)<l_j(x_2)$. A point is Pareto optimal if it is not dominated by any other point, implying that no task can be improved further without sacrificing another. Besides, a point $x$ is a Pareto stationary point if $\min_{w\in\mathcal{W}}\|G(x)w\|=0$.

\section{Loss Discrepancy Control for Multi-Task Learning}
In this section, we present our bilevel loss discrepancy control framework for multi-task learning. As illustrated in \Cref{fig:flowchart}, this framework contains a fine-grained bilevel loss discrepancy control procedure and a simplified first-order optimization pipeline. We first introduce the motivation and high-level idea that guide the overall design before moving to the technical details.

\begin{figure*}[!t]
\centering
\includegraphics[width=\linewidth]{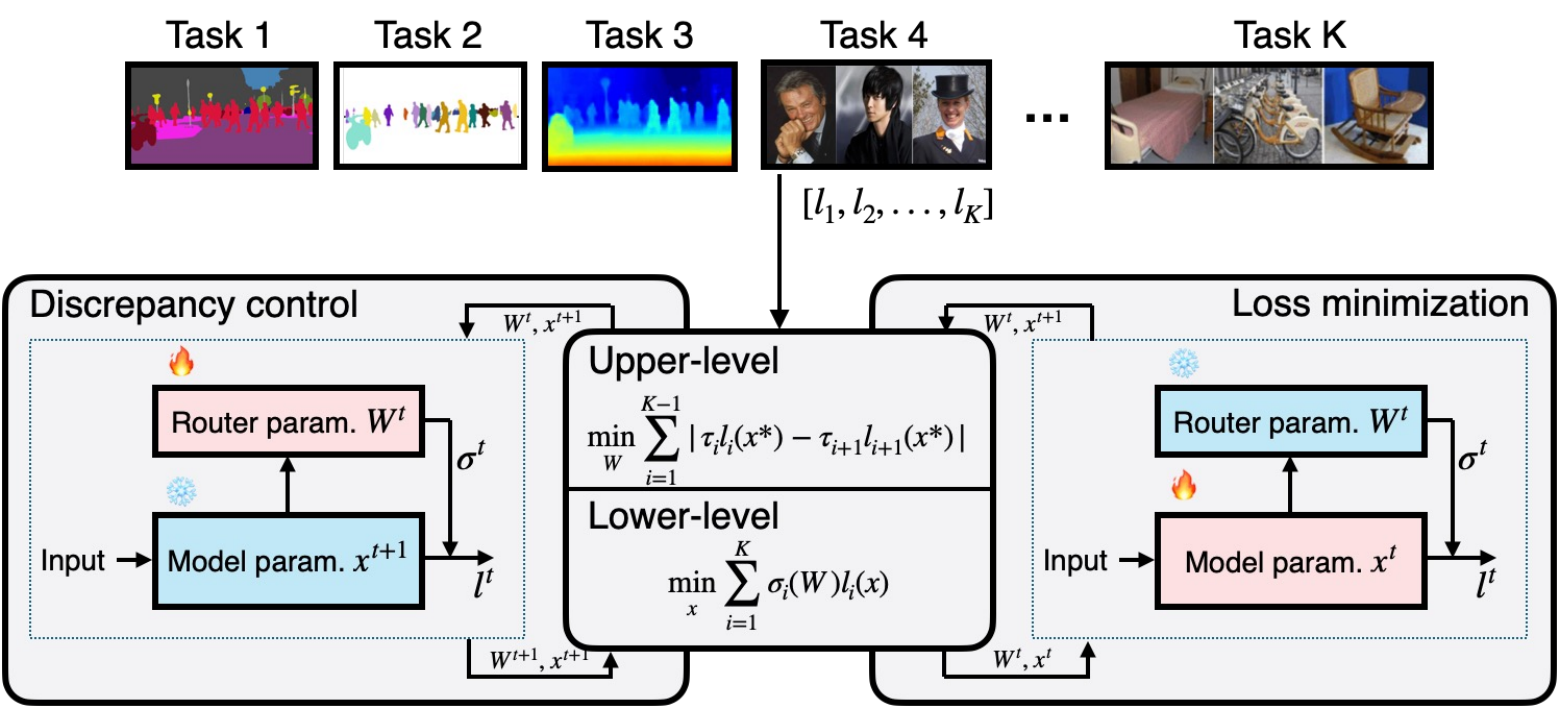}   \caption{Our bilevel loss discrepancy control pipeline for multi-task learning. First, different task losses will be computed. Then, the lower-level problem optimizes the model parameter $x^t$ by minimizing the weighted sum of task losses, and the upper-level problem optimizes the router model parameter $W^t$ for fine-grained loss discrepancy control.}\label{fig:flowchart}
\vspace{-0.1cm}
\end{figure*}

\subsection{Motivation and High-Level Idea}


Loss discrepancy commonly arises in MTL for several reasons. 
First, loss functions may operate on different scales due to task heterogeneity 
(e.g., classification vs. regression, as in QM9~\citep{ramakrishnan2014quantum}) or because 
they are measured in varying units (e.g., meters, centimeters, or millimeters; 
\citep{kendall2018multi}). 
Second, losses can evolve at different rates depending on task difficulty~\citep{si2025meta}. 
As a result, although all task losses are expected to converge toward zero, their convergence 
speeds are often imbalanced when treated with equal importance.  

To mitigate this issue, some prior approaches, e.g., \citep{si2025meta}, introduce 
a ranking-based strategy that optimizes only a subset of tasks with the largest current losses. 
While effective in reducing discrepancy, such methods require solving an additional 
non-smooth optimization problem, which limits scalability. Moreover, \cite{roh2020fairbatch} proposes to reduce 
loss gaps across different groups to enhance model fairness. In preference-based MOO, the goal is to identify a Pareto-optimal 
point such that 
$p_1 l_1 = p_2 l_2 = \dots = p_K l_K,$ 
where $p_1,\dots,p_K$ denote user-specified preferences. 
In balanced MTL, these preferences are naturally equal, reducing the goal to aligning 
task losses so that they remain close to one another.  

Motivated by this perspective, we explore a natural question: 
{\em can loss discrepancy in MTL be addressed by explicitly reducing the mutual gaps among task losses, 
while simultaneously ensuring that the solutions remain sufficiently close to the 
Pareto-stationary set?} In the next section, we provide a positive solution through a bilevel optimization formulation.

\subsection{Bilevel formulation for loss discrepancy control}

Building on this motivation, we propose a bilevel optimization-based approach, where the upper-level problem aims to mitigate the loss discrepancy, while the lower-level problem ensures that the solution 
remains within the Pareto-stationary set.   
Specifically, we consider the following formulation: 
\begin{align}\label{eq:object1}
    &\min_{W} \sum_{i=1}^{K-1}|\tau_i{l}_i(x^*)-\tau_{i+1}{l}_{i+1}(x^*)|:=f(W,x^*)\nonumber\\
    &\text{s.t. }x^*\in\arg\min_x\sum_{i=1}^K \sigma_i(W){l}_i(x):=g(W,x),
\end{align}
where we denote $x^*=x^*(W)$ for notational convenience. Note that we define a routing function $\sigma(W) \in \mathbb{R}^{K}$, which is parameterized by a small neural network with a softmax output layer. It takes the shared feature as the input and the output weights $K$ for different tasks. For the upper-level weight vector $\tau=(\tau_1,...,\tau_K)$, it controls the loss discrepancy, and we provide two effective options: (i) $\tau=\sigma(W)$ and (ii) $\tau={\mathbf 1}$ that work well in experiments. It can be seen from \cref{eq:object1} that the lower-level problem minimizes the weighted sum of losses w.r.t.~the model parameters $x$, while the upper-level problem minimizes the accumulated weighted loss gaps w.r.t.~the parameters $W$, controlling the loss discrepancy among tasks. Notably, the optimal solution does not require all losses to be equal, and there will be a \textbf{trade-off} between loss minimization and loss discrepancy control, as shown in \Cref{remark:trade-off}.

\subsection{Scalable first-order  algorithm design} 
To enable large-scale applications, we adopt an efficient first-order method to solve the problem in \cref{eq:object1}. Inspired by recent advances in first-order bilevel optimization \citep{kwon2023fully, yang2023achieving}, we reformulate it into an equivalent constrained optimization problem as follows.
\begin{align*}
   \min_{W,x} f(W,x) \;\; \text{s.t. } \underbrace{\sum_{i=1}^K\sigma_i(W){l}_i(x)-\sum_{i=1}^K\sigma_i(W){l}_i(x^*)}_{\text{penalty function 
 }p(W,x)}\leq 0.
\end{align*}
Then, given a penalty constant $\lambda>0$, penalizing $p(W,x)$ into the upper-level loss function yields
\begin{align}\label{eq:penalty}
\min_{W,x} f(W,x)+\lambda\sum_{i=1}^K\Big( \sigma_i(W){l}_i(x)-\sigma_i(W){l}_i(x^*)\Big).
\end{align}
Intuitively, a larger $\lambda$ allows more precise training on model parameters $x$ such that $x$ converges closer to $x^*$. Conversely, a smaller $\lambda$ prioritizes upper-level loss discrepancy control during training. The main challenge of solving the penalized problem above lies in the updates of  $W$, as shown below:
\begin{align}\label{WgUpdates}
    W^{t+1}=W^t-\alpha\Big(\nabla_Wf(W^t,x^t)+\lambda\big(\nabla_Wg(W^t,x^t)-\nabla_Wg(W^t,z_N^t)\big)\Big),
\end{align}
where $t$ is the epoch index, $\alpha$ is the step size, and $z_N^t$ is an approximation of $x_t^*\in\arg\min_{x}g(W^t,x)$ through the following loop of $N$ iterations  each epoch.
\begin{align}\label{eq:zupdate}
    z_{n+1}^t=z_n^t-\beta \nabla_zg(W^t,z_n^t), n=0,1,...N-1,
\end{align}
\begin{wrapfigure}{r}{0.57\textwidth}
\refstepcounter{mymalgo} 
\begin{tcolorbox}[
    width=0.57\textwidth,
    colback=white,
    colframe=gray,
    boxrule=0.3pt,
    fonttitle=\bfseries,
    title=Algorithm \themymalgo: LDC-MTL,
    left=4pt, right=4pt, top=4pt, bottom=4pt
]
\label{algorithm2}
\small
\begin{flushleft}
\textbf{Initialize:} $W^0, x^0$ \\
\textbf{for} $t = 0, 1, \dots, T{-}1$ \textbf{do} \\
\hspace{1em}$x^{t+1} = x^t - \alpha \left( \nabla_x f(W^t,x^t) + \lambda \nabla_x g(W^t,x^t) \right)$ \\
\hspace{1em}$W^{t+1} = W^t - \alpha \left( \nabla_W f(W^t,x^t) + \lambda \nabla_W g(W^t,x^t) \right)$ \\
\textbf{end for}
\end{flushleft}
\end{tcolorbox}
\vspace{-1em}
\end{wrapfigure}
where $N$ is typically chosen to be sufficiently large, ensuring that $z_N^t$ closely approximates $x_t^*$ (the full algorithm is provided in Algorithm~\ref{algorithm1} in the appendix). 
Consequently, this sub-loop of iterations incurs significant computational overhead, driven by the high dimensionality of $z$ (matching that of the model parameters) and the large value of $N$.

\begin{remark}\label{remark:trade-off}
\vspace{-0.2cm}
This formulation encourages task losses to be close but not necessarily equal. The penalty constant $\lambda$ controls the trade-off between minimizing the sum of weighted losses and reducing the loss discrepancy among tasks. 
\end{remark}

\begin{wrapfigure}{r}{0.49\textwidth}  
\includegraphics[width=\linewidth]{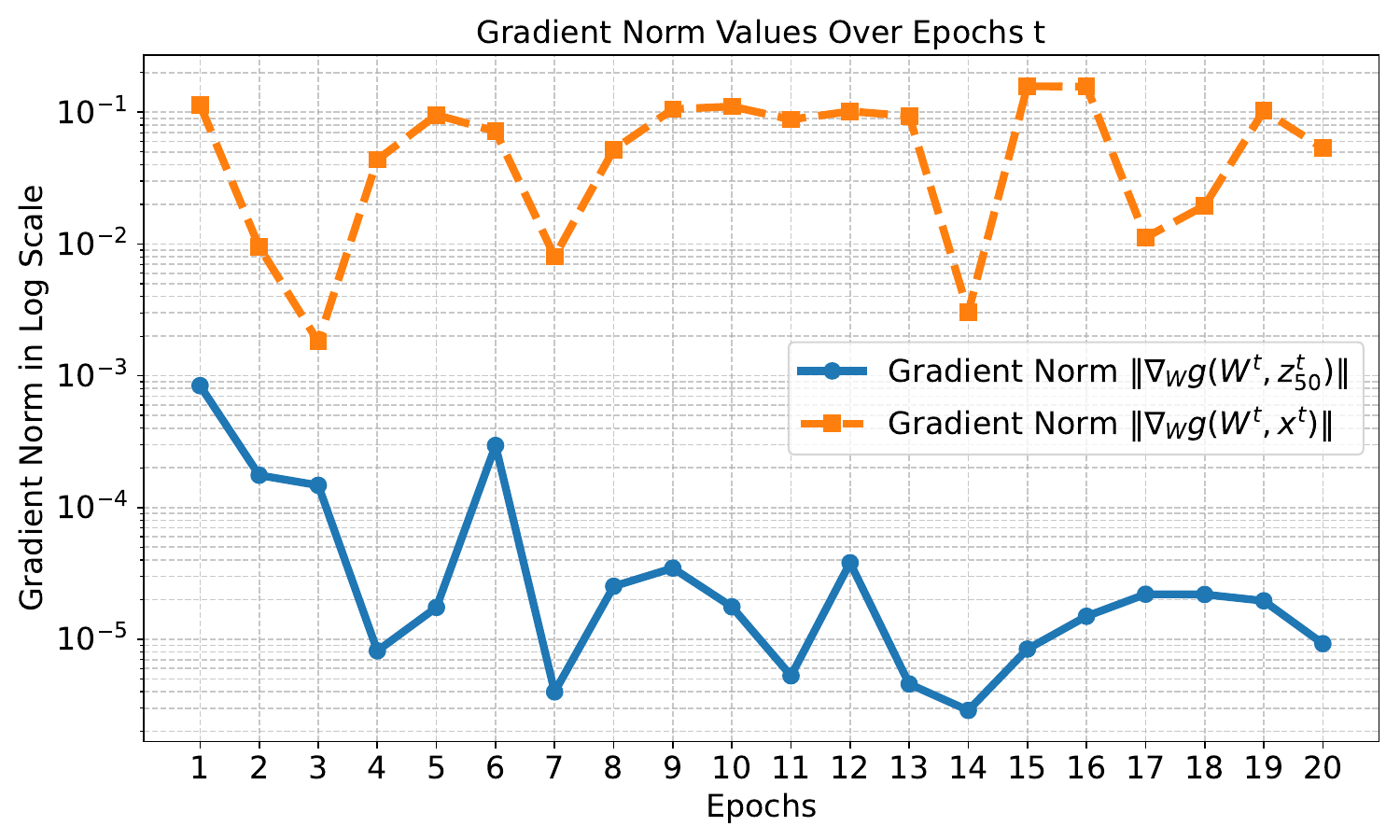}
\caption{Gradient norm values during the training process on the Cityscapes dataset. Similar phenomena have also been observed in other datasets.}\label{fig:gradientnorm}
\end{wrapfigure}
\paragraph{Scalable Algorithm.}
Our experiments show that the gradient norm $\|\nabla_W g(W^t, z_N^t)\|$ remains small, typically orders of magnitude smaller than the gradient norm $\|\nabla_W g(W^t, x^t)\|$, which is used to update the outer parameters $W$. This behavior is illustrated in \Cref{fig:gradientnorm} in the Appendix. Specifically, we set $N=50$ during training. On average, the ratio $\|\nabla_W g(W^t, x^t)\| / \|\nabla_W g(W^t, z_N^t)\|$ exceeds 100, despite some fluctuations. Under these conditions, the term $\nabla_W g(W, z_N^t)$ can be safely neglected, thereby eliminating the need for the expensive loop in \cref{eq:zupdate}. 
This approximation has been effectively utilized in large-scale applications, such as fine-tuning large language models, to reduce memory and computational costs \citep{shen2024seal}. It also serves as a foundation for our proposed algorithm, Loss Discrepancy Control for Multi-Task Learning (LDC-MTL), described in Algorithm~\ref {algorithm2}. LDC-MTL employs a fully single-loop structure, which requires only a single gradient computation for both variables per epoch, resulting in a $\mathcal{O}(1)$ time and memory cost. In \Cref{sec:theory}, we show that our LDC-MTL method attains both an $\epsilon$-accurate stationary point for the bilevel problem in \cref{eq:object1} and an $\epsilon$-accurate Pareto stationary point for the original loss functions under mild conditions.

\section{Empirical Results}
In this section, we conduct extensive practical experiments under multi-task classification, regression, and mixed settings to demonstrate the effectiveness of our method. Full experimental details can be found in \Cref{app:exp}. All experiments are conducted on one NVIDIA A6000.

\textbf{Baselines and evaluation.} To demonstrate the effectiveness of our proposed method, we evaluate its performance against a broad range of baseline approaches.
The compared methods include scalarization-based algorithms, such as Linear Scalarization (LS), Scale-Invariant (SI), Random Loss Weighting (RLW) \citep{lin2021reasonable}, Dynamic Weight Average (DWA) \citep{liu2019end}, Uncertainty Weighting (UW) \citep{kendall2018multi}, FAMO \citep{liu2024famo}, and GO4Align \citep{shen2024go4align}. We also benchmark against gradient manipulation methods, including  Multi-Gradient Descent Algorithm (MGDA) \citep{desideri2012multiple}, PCGrad \citep{yu2020gradient}, GradDrop \citep{chen2020just}, CAGrad \citep{liu2021conflict}, IMTL-G \citep{liu2021towards}, MoCo \citep{fernando2023mitigating}, Nash-MTL \citep{navon2022multi},  and FairGrad \citep{ban2024fair}. To provide a comprehensive evaluation, we report the performance of each individual task and employ one additional metric: $\bf\Delta m \%$ to quantify overall performance \citep{maninis2019attentive}. The $\bf\Delta m \%$ metric measures the average relative performance drop of a multi-task model compared to its corresponding single-task learning (STL). Formally, it is defined as:
$\Delta m \% = \frac{1}{K}\sum_{i=1}^K(-1)^{\delta_k}(M_{m,k}-M_{b,k})/M_{b,k} \times 100,$
where $M_{m,k}$ and $M_{b,k}$ represent the performance of the $k$-th task for the multi-task model $m$ and single-task model $b$, respectively. The indicator $\delta_k=1$ if higher values indicate better performance and 0 otherwise. 

\begin{table*}[!t]
\caption{Results on Cityscapes (2-task) dataset. Each experiment is repeated 3 times with different random seeds, and the average is reported. The best results are highlighted in \textbf{bold}, while the second-best results are indicated with \underline{underlines}. {\em Following prior works \cite{liu2021conflict, fernando2023mitigating, xiao2024direction, ban2024fair}, we report the mean values of $\Delta m\%$ for all results in the main text, with standard deviations provided in \Cref{app:exp}. }
}
\label{tab:cityscapes}
\begin{center}
\begin{small}
\begin{sc}
\begin{adjustbox}{max width=\textwidth}
\footnotesize
  \begin{tabular}{llllll}
    \toprule
    \multirow{2}*{Method} & \multicolumn{2}{c}{Segmentation} & \multicolumn{2}{c}{Depth} &
    \multirow{2}*{$\Delta m\%\downarrow$} \\
    \cmidrule(lr){2-3}\cmidrule(lr){4-5}
    & mIoU $\uparrow$ & Pix Acc $\uparrow$ & Abs Err $\downarrow$ & Rel Err $\downarrow$ & \\
    \midrule
    STL & 74.01 & 93.16 & 0.0125 & 27.77 & \\
    \midrule
    LS & 75.18 & 93.49 & 0.0155 & 46.77 & 22.60  \\
    SI & 70.95 & 91.73 & 0.0161 & 33.83 & 14.11 \\
    RLW \citep{lin2021reasonable}  & 74.57 & 93.41 & 0.0158 & 47.79 & 24.38 \\
    DWA \citep{liu2019end} & 75.24 & 93.52 & 0.0160 & 44.37 & 21.45 \\
    UW \citep{kendall2018multi} & 72.02 & 92.85 & 0.0140 & 30.13 & 5.89 \\
    FAMO \citep{liu2024famo} & 74.54 & 93.29 & 0.0145 & 32.59 & 8.13 \\
    GO4Align \citep{shen2024go4align} & 72.63 & 93.03 & 0.0164 & \underline{27.58} & 8.11\\
    \midrule
    MGDA \citep{desideri2012multiple} & 68.84 & 91.54 & 0.0309 & 33.50 & 44.14  \\
    PCGrad \citep{yu2020gradient} & 75.13 & 93.48 & 0.0154 & 42.07 & 18.29  \\
    GradDrop \citep{chen2020just} & 75.27 & 93.53 & 0.0157 & 47.54 & 23.73  \\
    CAGrad \citep{liu2021conflict} & 75.16 & 93.48 & 0.0141 & 37.60 & 11.64  \\
    IMTL-G \citep{liu2021towards} & 75.33 & 93.49 & 0.0135 & 38.41 & 11.10 \\
    MoCo \citep{fernando2023mitigating} & \underline{75.42} & 93.55 & 0.0149 & 34.19 & 9.90 \\
    Nash-MTL \citep{navon2022multi} & 75.41 & \underline{93.66} & \underline{0.0129} & 35.02 & 6.82 \\
    FairGrad \citep{ban2024fair} & \textbf{75.72} & \textbf{93.68} & 0.0134 & 32.25 & \underline{5.18} \\
    \midrule
    LDC-MTL & 74.53 & 93.42 & \textbf{0.0128} & \textbf{26.79} & \textbf{-0.57}\\
    \bottomrule
  \end{tabular}
  \end{adjustbox}
\end{sc}
\end{small}
\end{center}
\vskip -0.1in
\end{table*}

\subsection{Experimental results}
Results on the four benchmark datasets are provided in \Cref{tab:cityscapes}, \Cref{tab:celeba_qm9}, \Cref{tab:full_qm9}, and \Cref{tab:nyuv2} in the appendix. We observe that LDC-MTL outperforms existing methods on both the CelebA and QM9 datasets, achieving the lowest performance drops of $\Delta m\% = -1.31$ and $\Delta m\% = 49.5$, respectively. Detailed results for the QM9 dataset illustrate that it achieves a balanced performance across all tasks.
These results highlight the effectiveness of our method in handling a large number of tasks in both classification and regression settings. Meanwhile, it achieves the lowest performance drop, with $\Delta m\%=-0.57$ on the Cityscapes dataset, while delivering comparable results on the NYU-v2 dataset, where the detailed results are shown in \Cref{tab:nyuv2} in the appendix. 
These findings highlight the capability of LDC-MTL to effectively handle mixed MTL scenarios. 

\begin{wraptable}{r}{0.6\textwidth}
\centering
\caption{Results on CelebA (40-task), QM9 (11-task), and NYU-v2 (3-task) datasets. The best results are highlighted in \textbf{bold}, while the second-best results are indicated with \underline{underlines}.}
\label{tab:celeba_qm9}
\begin{adjustbox}{max width=0.6\textwidth}
\begin{tabular}{lccc}
\toprule
\small
\multirow{2}*{Method} & \multicolumn{1}{c}{CelebA} & \multicolumn{1}{c}{QM9} & \multicolumn{1}{c}{NYU-v2} \\
\cmidrule(lr){2-4}
 & $\Delta m\%\downarrow$ & $\Delta m\%\downarrow$ & $\Delta m\%\downarrow$ \\
\midrule
LS & 4.15 & 177.6 & 5.59 \\
SI & 7.20 & 77.8 & 4.39 \\
RLW \citep{lin2021reasonable} & 1.46 & 203.8 & 7.78 \\
DWA \citep{liu2019end} & 3.20 & 175.3 & 3.57 \\
UW \citep{kendall2018multi} & 3.23 & 108.0 & 4.05 \\
FAMO \citep{liu2024famo} & 1.21 & 58.5 &  -4.10 \\
GO4Align \citep{shen2024go4align} & 0.88  & \underline{52.7} & \textbf{-6.08} \\
\midrule
MGDA \citep{desideri2012multiple} & 14.85 & 120.5 & 1.38 \\
PCGrad \citep{yu2020gradient} & 3.17 & 125.7 & 3.97 \\
CAGrad \citep{liu2021conflict} & 2.48 & 112.8 & 0.20 \\
IMTL-G \citep{liu2021towards} & 0.84 & 77.2 & -0.76 \\
Nash-MTL \citep{navon2022multi} & 2.84 & 62.0 & -4.04 \\
FairGrad \citep{ban2024fair} & \underline{0.37} & 57.9 & \underline{-4.66} \\
\midrule
LDC-MTL & \textbf{-1.31} & \textbf{49.5} & -4.40 \\
\bottomrule
\end{tabular}
\end{adjustbox}
\end{wraptable}

\textbf{Efficiency comparison.} We compare the running time of well-performing approaches in \Cref{fig:time}. In particular, our method introduces negligible overhead compared to LS with at most a $1.11\times$ increase, aligning with other $\mathcal{O}(1)$ methods such as GO4Align and FAMO.
In contrast, gradient manipulation methods, which take the computational cost $\mathcal{O}(K)$, become significantly slower in many-task scenarios. For example, Nash-MTL requires approximately $12\times$ more training time than LDC-MTL on the CelebA dataset.

\subsection{Experimental analysis}
\textbf{Loss discrepancy and gradient conflict.} 
To demonstrate the effectiveness of our bilevel formulation for loss discrepancy control, we conduct a detailed analysis of the loss distribution on the CelebA dataset, comparing linear scalarization (LS) with our proposed method. 
As shown in \Cref{fig:loss_discrepancy} (left) and statistics in \Cref{tab:loss-stats}, the distribution of all 40 task-specific losses reveals that our approach yields more concentrated and consistently lower values. Moreover, we randomly select 8 out of 40 tasks and check the gradient cosine similarity among them. \Cref{fig:loss_discrepancy} (right) illustrates the cosine similarities of task gradients after the 15th epoch, which shows that the gradient conflict is mitigated as a side-effect.
\begin{wraptable}{r}{0.45\textwidth}
\caption{Statistics of loss values for LDC-MTL and LS on the CelebA dataset. Lower mean and std indicate better and more stable performance.}
\label{tab:loss-stats}
\begin{adjustbox}{max width=0.45\textwidth}
\begin{tabular}{lcccc}
\toprule
\textbf{Method} & \textbf{Mean} ↓ & \textbf{Std} ↓ & \textbf{Min} & \textbf{Max} \\
\midrule
LDC-MTL & \textbf{0.189} & \textbf{0.133} & \textbf{0.029} & \textbf{0.538} \\
LS & 0.287 & 0.195 & 0.032 & 0.729 \\
\bottomrule
\end{tabular}
\end{adjustbox}
\end{wraptable}


\textbf{Comparison with weight-swept LS.} 
Prior work has noted that certain multi-task learning methods, such as RLW and PCGrad, do not outperform LS, which can approximate the Pareto front through weight sweeping \citep{kurin2022defense, xin2022current}. To this end, we conduct a careful comparison between our LDC-MTL and LS with weight sweeping on the Cityscapes dataset. From \Cref{tab:cityscapes-ls} in the appendix, even after careful weight sweeping, LS does not outperform our approach. Furthermore, the scatter plot in \Cref{fig:PF+weightchange} (a-b) illustrates that LDC-MTL solutions form the Pareto frontier.
\begin{figure*}[!t]
    \centering
 \includegraphics[width=\linewidth]{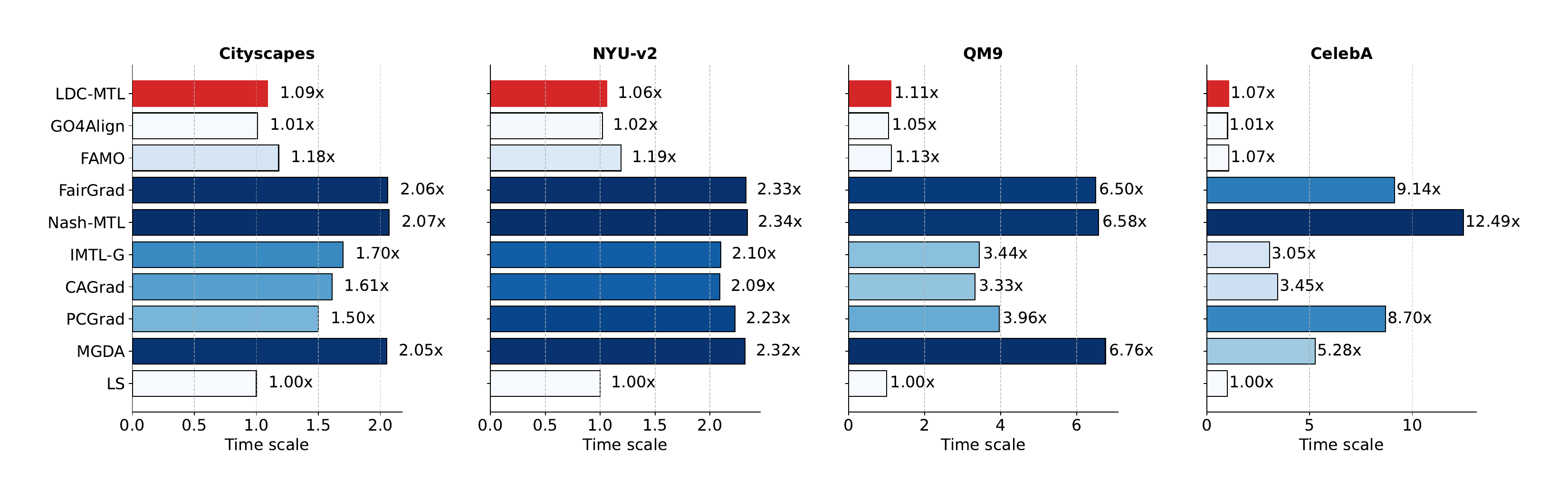}
 
 \vspace{-0.3cm}
    \caption{Time scale comparison among well-performing approaches, with LS considered the reference method for standard time.}
    \label{fig:time}
    \vspace{-0.3cm}
\end{figure*}

\textbf{Impact of task weight.} To further investigate the importance of dynamic weighting, we provide the evolution of task weights for the 11 tasks on the QM9 dataset in \Cref{fig:PF+weightchange} (right). It is evident that the task weights adapt meaningfully during the
early stages of training and gradually converge later on. Besides, the converged weights are nearly equal, recovering to LS. However, LS does not perform well as shown in \Cref{tab:full_qm9}. These findings suggest that, for dynamic-weight methods, the \emph{training trajectory}, not just the final weights, plays a critical role in achieving strong performance.

\begin{wraptable}{r}{0.4\textwidth}
\centering
\caption{Parameter tuning results on the CelebA dataset.}
\label{tab:celeba-tuning}
\begin{adjustbox}{max width=0.4\textwidth}
\small
\begin{tabular}{ll}
\toprule
\textbf{Method} & $\Delta m\%\downarrow$ \\
\midrule
FairGrad \citep{ban2024fair} & 0.37 \\
\midrule
LDC-MTL ({$\lambda=0.005$}) & -1.27 \\
LDC-MTL ({$\lambda=0.008$}) & -1.16 \\
LDC-MTL ({$\lambda=0.01$})  & \textbf{-1.31} \\
LDC-MTL ({$\lambda=0.02$})  & -0.96 \\
\bottomrule
\end{tabular}
\end{adjustbox}
\vspace{-1em}
\end{wraptable}
\textbf{Hyperparameter sensitivity.} In our method, hyperparameters include the step size $\alpha$ and the penalty constant $\lambda$. For the step size, we adopt the settings from prior experiments without extensive tuning. While we use the same step size for updates to both $W$ and $x$ in our implementation, these can be adjusted independently in practice. For the penalty constant $\lambda$, we determine optimal values through a grid search and provide additional experimental results in \Cref{tab:celeba-tuning} and \Cref{tab:cityscapes-LDC-MTL} in the appendix.


\section{Theoretical Analysis}\label{sec:theory}
\begin{definition}
Given $L>0$, a function $\ell$ is said to be $L$-Lipschitz-continuous on $\mathcal{X}$ if it holds for any $x,x^\prime\in\mathcal{X}$ that $\|\ell(x)-\ell(x^\prime)\|\leq L\|x-x^\prime\|$.  A function $\ell$ is said to be $L$-Lipschitz-smooth
if its gradient is $L$-Lipschitz-continuous.
\end{definition}
\begin{definition}[Pareto stationarity]
We say $x$ is an $\epsilon$-accurate Pareto stationary point for loss functions $\{l_i(x)\}$ if $\min_{w\in\mathcal{W}}\|G(x)w\|^2=\mathcal{O}(\epsilon)$, where $G(x)=[\nabla l_1(x),\nabla l_2(x),...,\nabla l_K(x)]^\top$.
\end{definition}
Inspired by \cite{shen2023penalty}, we also define the following two surrogates of the original bilevel problem in \cref{eq:object1}.
\begin{definition}
Define two surrogate bilevel problems as 
\begin{align}
\mathcal{BP}_{\lambda}:&\min_{W,x} f(W,x)+\lambda (g(W,x)-g(W,x^*)),
\mathcal{BP}_\epsilon:\min_{W,x}f(W,x)\;\;\text{s.t. }g(W,x)-g(W,x^*)\leq\epsilon,\nonumber
\end{align}
where $\mathcal{BP}_\lambda$ is the penalized bilevel problem, and $\mathcal{BP}_\epsilon$ recovers to the original problem if $\epsilon=0$. 
\end{definition}
\begin{assumption}[Lipschitz and smoothness]\label{ass:lipschitz}
There exists a constant $L$ such that the upper-level function $f(W,\cdot)$ is $L$-Lipschitz continuous. There exists constants $L_f$ and $L_g$ such that functions $f(W,x)$ and $g(W,x)$ are $L_f$- and $L_g$-Lipschitz-smooth.
\end{assumption}
\begin{assumption}[Polyak-Lojasiewicz (PL) condition]\label{ass:pl}
The lower-level function $g(W,\cdot)$ satisfies the $\frac{1}{\mu}$-PL condition if there exists a $\mu>0$ such that given any $W$, it holds for any feasible $x$ that $\|\nabla_x g(W,x)\|^2\geq\frac{1}{\mu}(g(W,x)-g(W,x^*)).$
\end{assumption}
Lipschitz continuity and smoothness are standard assumptions in the study of bilevel optimization \citep{ghadimi2018approximation,ji2021bilevel}. While the absolute values in the upper-level function in \cref{eq:object1} are non-smooth, they can be easily modified to ensure smoothness, such as by using a soft absolute value function of the form  $y=\sqrt{x^2+\gamma}$ where $\gamma$ is a small positive constant.
Moreover, the PL condition can be satisfied in over-parameterized neural network settings \citep{mei2020global, frei2021proxy}. The following theorem presents the convergence analysis of our algorithms.
\begin{figure}[!t]
\centering
\includegraphics[width=1\linewidth]{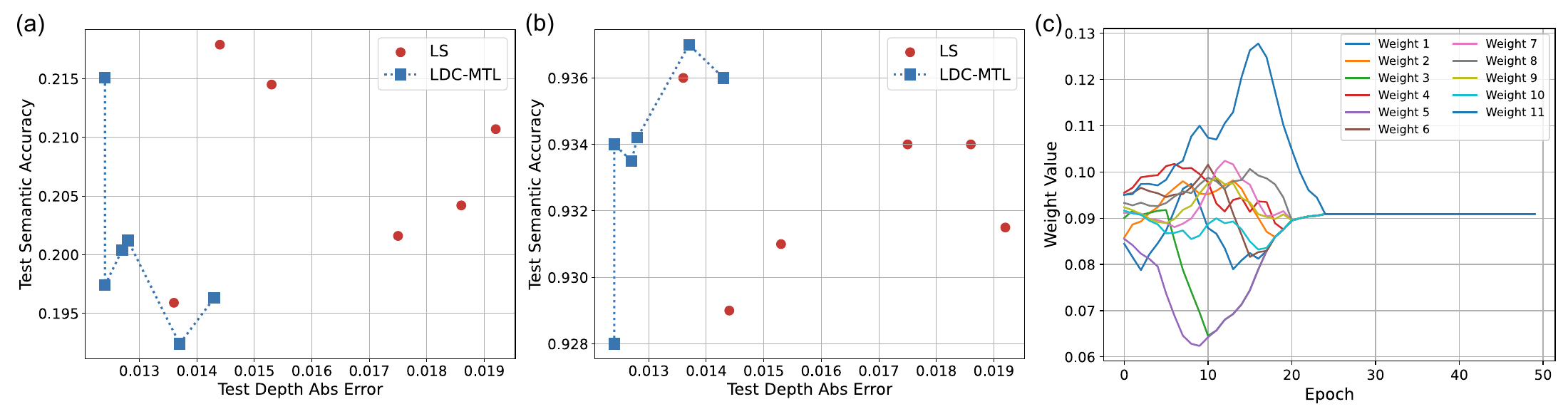}
\caption{(a–b) Comparison between weight-swept LS and LDC-MTL on the Cityscapes dataset. In both cases, LDC-MTL solutions lie on the Pareto frontier. (c) Weight change of 11 tasks using LDC-MTL during the training on the QM9 dataset.}
\label{fig:PF+weightchange}
\end{figure}
\begin{theorem}\label{theorem:2}
Suppose Assumptions  \ref{ass:lipschitz}- \ref{ass:pl} are satisfied. Select hyperparameters 
\begin{align}
&\alpha\in\Big(0,\frac{1}{L_f+\lambda(2L_g+L_g^2\mu)}\Big],\; \beta\in(0,\frac{1}{L_g}], \lambda= L\sqrt{3\mu{\epsilon}^{-1}},\text{ and } N=\Omega(\log(\alpha t)).\nonumber
\end{align}

(i) Our method with the updates \cref{WgUpdates} and \cref{eq:zupdate} (i.e., Algorithm~\ref{algorithm1} in the appendix) finds an $\epsilon$-accurate stationary point of the problem $\mathcal{BP}_\lambda$. If this stationary point is a local/global solution to $\mathcal{BP}_\lambda$, it is also a local/global solution to $\mathcal{BP}_\epsilon$. Furthermore, it is also an $\epsilon$-accurate Pareto 
stationary point for loss functions $l_i(x),i=1,...,K.$

(ii) Moreover, if $\|\nabla_Wg(W^t,z_N^t)\|=\mathcal{O}(\epsilon)$ for $t=1,...,T$. The simplified method in Algorithm~\ref{algorithm2} also achieves the same convergence guarantee as that in $(i)$.

\end{theorem}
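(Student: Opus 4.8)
\textbf{Proof proposal for Theorem~\ref{theorem:2}.}

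The plan is to treat the penalized objective $F_\lambda(W,x):=f(W,x)+\lambda\big(g(W,x)-g(W,x^*)\big)$ as a single smooth function in the joint variable $(W,x)$ and run the standard descent-lemma machinery on it, then translate the resulting stationarity back to the original bilevel problem and to Pareto stationarity. First I would establish the smoothness of $F_\lambda$: by Assumption~\ref{ass:lipschitz}, $f$ is $L_f$-smooth and $g$ is $L_g$-smooth, and the value function $x^*(W)=\arg\min_x g(W,x)$ inherits Lipschitz smoothness through the PL condition in Assumption~\ref{ass:pl} (this is where the $L_g^2\mu$ term in the bound on $\alpha$ originates — it comes from differentiating the composite $g(W,x^*(W))$ and using a PL-based bound on $\|x^*(W)-x^*(W')\|$). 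With the effective smoothness constant $L_f+\lambda(2L_g+L_g^2\mu)$ in hand, the chosen step size $\alpha$ guarantees a one-step descent inequality of the form $F_\lambda(W^{t+1},x^{t+1})\le F_\lambda(W^t,x^t)-\tfrac{\alpha}{2}\|\nabla F_\lambda(W^t,x^t)\|^2$, provided the update is the true gradient of $F_\lambda$.

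The subtlety is that the actual updates in \cref{WgUpdates}--\cref{eq:zupdate} do not use the exact gradient of $F_\lambda$, because $x^*$ is replaced by the inner-loop approximation $z_N^t$; the difference introduces an error term. Here I would invoke the PL condition together with the $\beta\le 1/L_g$ choice to show the inner loop \cref{eq:zupdate} contracts linearly, so that $g(W^t,z_N^t)-g(W^t,x_t^*)\le (1-\beta/\mu)^N(g(W^t,z_0^t)-g(W^t,x_t^*))$, and hence the gradient error $\|\nabla_W g(W^t,z_N^t)-\nabla_W g(W^t,x_t^*)\|$ shrinks geometrically in $N$. The requirement $N=\Omega(\log(\alpha t))$ is exactly what makes this error summable against the descent, so that telescoping $\sum_t \|\nabla F_\lambda(W^t,x^t)\|^2$ over $T$ steps and dividing by $T$ yields an $O(1/(\alpha T))$ bound plus a controllable perturbation, delivering an $\epsilon$-accurate stationary point of $\mathcal{BP}_\lambda$. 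For part~(ii), I would instead absorb the dropped term directly: the hypothesis $\|\nabla_W g(W^t,z_N^t)\|=\mathcal{O}(\epsilon)$ says the neglected quantity is already of the target order, so the single-loop update of Algorithm~\ref{algorithm2} differs from the exact $F_\lambda$-gradient by an $O(\epsilon)$ perturbation, and the same telescoping argument goes through verbatim.

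The final two implications are where the value-function reformulation pays off. To pass from a stationary point of $\mathcal{BP}_\lambda$ to $\mathcal{BP}_\epsilon$, I would follow the penalty-to-constraint equivalence of \cite{shen2023penalty}: with the specific choice $\lambda=L\sqrt{3\mu\epsilon^{-1}}$, a local/global minimizer of the penalized problem satisfies the constraint $g(W,x)-g(W,x^*)\le\epsilon$, and by the PL inequality this constraint violation bound converts into $\|x-x^*\|$ being small, so the penalized solution is feasible and optimal for $\mathcal{BP}_\epsilon$; setting $\epsilon=0$ recovers the exact bilevel problem. For the Pareto claim, I would use the lower-level stationarity $\nabla_x g(W,x)=\sum_i\sigma_i(W)\nabla l_i(x)\approx 0$: since $\sigma(W)$ lies in the simplex $\mathcal{W}$ (softmax output), the weight vector $w=\sigma(W)$ is a feasible point of $\min_{w\in\mathcal{W}}\|G(x)w\|^2$, so the approximate lower-level stationarity forces $\min_{w\in\mathcal{W}}\|G(x)w\|^2=\mathcal{O}(\epsilon)$, which is precisely the definition of an $\epsilon$-accurate Pareto stationary point. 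I expect the main obstacle to be the first step — rigorously establishing Lipschitz smoothness of the value function $g(W,x^*(W))$ under only the PL condition rather than strong convexity, and tracking the induced smoothness constant carefully enough to justify the exact step-size bound stated in the theorem.
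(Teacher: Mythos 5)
Your outline assembles the right ingredients and for part (i) it essentially mirrors the paper, which simply invokes Theorem 3 and Proposition 2 of \cite{shen2023penalty} for the double-loop convergence and the penalty-to-constraint equivalence. For part (ii), however, you take a genuinely different and harder route than the paper. You run the descent lemma on $F_\lambda(W,x)=f(W,x)+\lambda(g(W,x)-g(W,x^*))$ and treat the single-loop update as an inexact gradient of $F_\lambda$, which forces you to establish Lipschitz smoothness of the value function $W\mapsto g(W,x^*(W))$ under only the PL condition --- exactly the obstacle you flag at the end. The paper sidesteps this entirely: it applies the descent lemma to $\Phi(\theta)=f(\theta)+\lambda g(\theta)$, which is the function Algorithm~2 performs \emph{exact} gradient descent on and which is $(L_f+\lambda L_g)$-smooth directly from Assumption~\ref{ass:lipschitz}, obtaining $\|\nabla\Phi(\theta^t)\|^2\le\frac{2}{\alpha}(\Phi(\theta^t)-\Phi(\theta^{t+1}))$ with no perturbation analysis. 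It then bounds the target quantity $\|\nabla F_\lambda\|^2$ by Young's inequality as $2\|\nabla\Phi\|^2+4\lambda^2\|\nabla_Wg(W^t,x_t^*)-\nabla_Wg(W^t,z_N^t)\|^2+4\lambda^2\|\nabla_Wg(W^t,z_N^t)\|^2$, controlling the middle term by the PL-based linear contraction of the inner loop (with $N=\Omega(\log(\alpha t))$ making it $\mathcal{O}(t^{-2})$, hence summable) and the last term by the $\mathcal{O}(\epsilon)$ hypothesis, so its squared contribution is $\mathcal{O}(\lambda^2\epsilon^2)=\mathcal{O}(\epsilon)$. If you want to keep your route, you would need the weak smoothness of the value function from \cite{shen2023penalty} (this is where the $L_g^2\mu$ term in the step-size bound comes from); otherwise the paper's decomposition is the cleaner path.

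One concrete slip in your final step: you invoke ``the PL inequality'' to convert the constraint violation $g(W,x)-g(W,x^*)\le\epsilon$ into approximate lower-level stationarity, but PL gives a \emph{lower} bound $\|\nabla_xg(W,x)\|^2\ge\frac{1}{\mu}(g(W,x)-g(W,x^*))$, which points the wrong way. The paper's Lemma~\ref{lemma:fullversionofpareto} instead uses the $L_g$-smoothness lower bound at the minimizer (where $\nabla_xg(W,x^*)=0$) to get $\|G(x)\sigma(W)\|^2=\|\nabla_xg(W,x)\|^2\le 2L_g\bigl(g(W,x)-g(W,x^*)\bigr)=\mathcal{O}(\epsilon)$, and then concludes via the simplex feasibility of $\sigma(W)$ exactly as you do. With that substitution your Pareto argument is the paper's.
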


The complete proof is provided in \Cref{theorem:fullversionof2}.
In the first part of \Cref{theorem:2}, we establish a connection between the stationarity of $\mathcal{BP}_\lambda$ and Pareto stationarity. Secondly, it introduces an additional gradient vanishing assumption, which has been validated in our experiments. It demonstrates that our simplified LDC-MTL method can also attain an $\epsilon$-accurate stationary point for the problem $\mathcal{BP}_\lambda$ and an $\epsilon$-accurate Pareto stationary point for the original loss functions. 

\section{Conclusion}

We introduced LDC-MTL, a scalable loss discrepancy control approach for multi-task learning based on bilevel optimization. Our method achieves efficient loss discrepancy control with only $\mathcal{O}(1)$ time and memory complexity while guaranteeing convergence to both a stationary point of the bilevel problem and an $\epsilon$-accurate Pareto stationary point for all task loss functions. Extensive experiments demonstrate that LDC-MTL outperforms existing methods in both accuracy and efficiency, highlighting its effectiveness for large-scale MTL. For future work, we plan to explore the application of our method to broader multi-task learning problems, including recommendation systems.

\bibliography{example_paper}
\bibliographystyle{ref_style}

\newpage
\appendix
\onecolumn
\section{Experiment details}\label{app:exp}


\subsection{Experimental setup}\label{app:exp_setup}
\textbf{Image-Level Classification. } CelebA \citep{liu2015deep}, one of the most widely used datasets, is a large-scale facial attribute dataset containing over 200K celebrity images. Each image is annotated with 40 attributes, such as the presence of eyeglasses and smiling. Following the experimental setup in \cite{ban2024fair}, we treat CelebA as a 40-task multi-task learning (MTL) classification problem, where each task predicts the presence of a specific attribute. Since all tasks involve binary classification with the same \textit{binary cross-entropy} loss function, we do not apply any normalization for both options of $\tau=\bf 1$ and $\tau=\sigma$.
The network architecture consists of a 9-layer convolutional neural network (CNN) as the shared model, with multiple linear layers serving as task-specific heads. We train the model for 15 epochs using the Adam optimizer with a batch size of 256.

\textbf{Regression.} QM9 \citep{ramakrishnan2014quantum} dataset is another widely used benchmark for multi-task regression problems in quantum chemistry. It contains 130K molecules represented as graphs, and 11 properties to be predicted. Though all tasks share the same loss function, \textit{mean squared error}, they exhibit significantly varying scales: a phenomenon commonly observed in regression tasks but less prevalent in classification tasks, as shown in \Cref{fig:qm9lossvalues}. To mitigate this scale discrepancy, we first adopt the logarithmic normalization such that $\tilde{l}_i = \log\left(\frac{l_i}{l_{i,0}}\right)$, where $l_{i,0}$ represents the initial loss value for the $i$-th task at each epoch, motivated by \cite{liu2024famo}, for both options of $\tau=\bf1$ and $\tau=\sigma$. Our experiments demonstrate that this initialization approach stabilizes training by reducing large fluctuations caused by significant scale variations.
Following the experimental setup in \cite{liu2024famo,navon2022multi}, we use the same model and data split, 110K molecules for training, 10k for validation, and the rest 10k for testing. The model is trained for 300 epochs with a batch size of 120. The learning rate starts at 1e-3  and is reduced whenever the validation performance stagnates for 5 consecutive epochs.

\textbf{Dense Prediction.} 
The Cityscapes dataset \citep{cordts2016cityscapes} consists of 5000 street-scene images designed for two tasks: 7-class semantic segmentation (a classification task) and depth estimation (a regression task). Similarly, the NYU-v2 dataset \citep{silberman2012indoor} is widely used for indoor scene understanding and contains 1449 densely annotated images. It includes one pixel-level classification task, semantic segmentation, and two pixel-level regression tasks, 13-class depth estimation, plus surface normal prediction. These datasets provide benchmarks for evaluating the performance of our method in mixed multi-task settings. Since the number of tasks is small and the loss values exhibit minimal variation, we applied rescaled normalization when selecting $\tau=\sigma$ and no normalization when selecting  $\tau=\bf 1$. The rescaled normalization normalizes loss values by rescaling each task's loss using its initial loss value $l_i^\prime$, such that $\tilde{l}_i = \frac{l_i}{l_i^\prime}$. The resulting normalized loss reflects the training progress and ensures comparability across tasks. We follow the same experimental setup described in \cite{liu2021conflict, navon2022multi} and adopt MTAN \citep{liu2019end} as the backbone, which incorporates task-specific attention modules into SegNet \citep{badrinarayanan2017segnet}. Both models are trained for 200 epochs, with batch sizes of 8 for Cityscapes and 8 for NYU-v2. The learning rates are initialized at 3e-4 and 1e-4 for the first 100 epochs and reduced by half for the remaining epochs, respectively.

In a word, the hyperparameter choices are summarized in \Cref{tab:training-hyperparams}.
\begin{table}[ht]
\centering
\caption{Training hyperparameters combination and the best results per dataset.}
\begin{tabular}{lcccc}
\hline
\textbf{Dataset} & \textbf{Stepsize} & \textbf{Penalty constant} & $\bf\tau$ & $\Delta m\%$\\
\hline
CelebA       & 1e-03  & 0.01 & $\sigma(W)$ & -1.31$\pm$0.26\\
QM9          & 1e-03  & 0.05 & $\sigma(W)$ & 49.5$\pm$3.64\\
Cityscapes   & 3e-04  & 0.1 & $\bf 1$ & -0.57$\pm$1.17\\
NYU-v2       & 8e-05  & 0.05 & $\bf 1$ & -4.40$\pm$0.74\\
\hline
\end{tabular}
\label{tab:training-hyperparams}
\end{table}
\subsection{Loss and gradient scales}
We show that loss can vary significantly in scale. For example, as shown in \Cref{fig:qm9lossvalues}, we observed that the loss values during training across 11 regression tasks in the QM9 dataset exhibit substantial differences in magnitude, with loss ratios exceeding 1000 in certain instances. Moreover, we have also drawn the loss progression over time of LS. Tasks 1–4 are easier to optimize and dominate the training when using linear scalarization, leading other tasks to converge suboptimally. In contrast, LDC-MTL accounts for loss scale and adaptively reweights tasks using bilevel optimization, enabling better balance. For example, tasks 6–10 improve their final loss scale from $10^{-3}$ to $10^{-5}$. 
\begin{figure}[ht]
  \centering
\includegraphics[width=\linewidth]{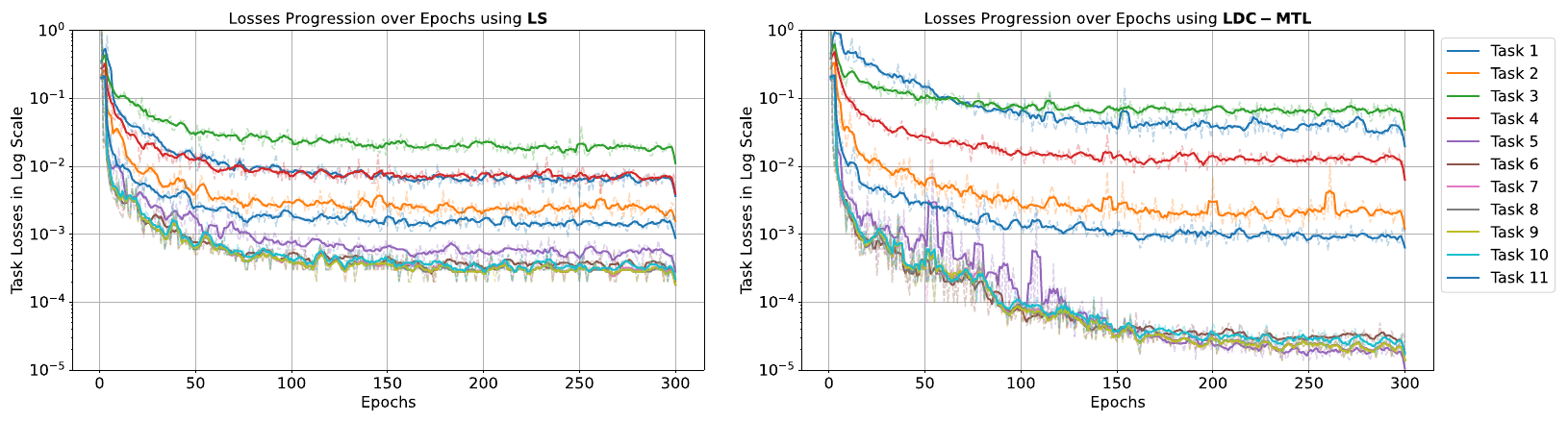}
  \caption{Curves of loss values during the training process for all 11 tasks on the QM9 dataset using different methods. The loss values vary significantly across different tasks. For LS, some tasks converge suboptimally due to the loss discrepancy.}
  \label{fig:qm9lossvalues}
\end{figure}

Moreover, our experiments reveal that the gradient norm $\|\nabla_W g(W^t, z_N^t)\|$ remains sufficiently small, typically orders of magnitude smaller than the gradient norm $\|\nabla_W g(W^t, x^t)\|$, which is used to update outer parameters $W$. This behavior is illustrated in \Cref{fig:gradientnorm}. Specifically, we set $N=50$ during training. In average, the ratio $\|\nabla_W g(W^t, x^t)\| / \|\nabla_W g(W^t, z_N^t)\|$ exceeds 100, despite some fluctuations.

\subsection{Toy example}\label{app:toy}
To better understand the benefits of our method, we illustrate the training trajectory along with the training time in a toy example of 2-task learning following the same setting in FAMO \citep{liu2024famo}. The loss functions $L_1(x),L_2(x)$, where $x$ is the model parameter, of two tasks are listed below.
\begin{align}
L_1(x)&=0.1\times(c_1(x)f_1(x)+c_2(x)g_1(x)),\;\;L_2(x)=c_1(x)f_2(x)+c_2(x)g_2(x)\;\;\text{where}\nonumber\\
f_1(x) &= \log{\big(\max(|0.5(-x_1-7)-\tanh{(-x_2)}|,~~0.000005)\big)} + 6, \nonumber\\
    f_2(x) &= \log{\big(\max(|0.5(-x_1+3)-\tanh{(-x_2)}+2|,~~0.000005)\big)} + 6, \nonumber\\
    g_1(x) &= \big((-x_1+7)^2 + 0.1*(-x_2-8)^2\big)/10-20, \nonumber\\
    g_2(x) &= \big((-x_1-7)^2 + 0.1*(-x_2-8)^2)\big/10-20, \nonumber\\
    c_1(x) &= \max(\tanh{(0.5*x_2)},~0)~~\text{and}~~c_2(x) = \max(\tanh{(-0.5*x_2)},~0).
\end{align}
In \Cref{fig:toy}, the black dots represent 5 chosen initial points $\{(-8.5,7.5),(-8.5,5),(0,0),(9,9),(10,-8)\}$ while the black stars represent the converging points on the Pareto front. We use the Adam optimizer and train each method for 50k steps.
Our method can always converge to balanced results efficiently. We use Adam optimizer with a learning rate of 1e-3. The training time is recalculated according to real-time ratios in our machine. We find that LS and MGDA do not converge to balanced points, while FAMO converges to balanced results to some extent. Meanwhile, our method with rescale normalization can always converge to balanced results efficiently.
\subsection{Detailed results}
Here we provide detailed results of NYU-v2 in \Cref{tab:nyuv2} and QM9 in \Cref{tab:full_qm9}.
\begin{table*}[ht]
\caption{Detailed results of  on QM9 (11-task) dataset. Each experiment is repeated 3 times, and the average is reported. The best results are highlighted in \textbf{bold}, while the second-best results are indicated with \underline{underlines}.}
\label{tab:full_qm9}
\begin{center}
\begin{small}
\begin{sc}
\begin{adjustbox}{max width=\textwidth}
  \begin{tabular}{lllllllllllll}
    \toprule
    \multirow{2}*{Method} & $\mu$ & $\alpha$ & $\epsilon_{HOMO}$ & $\epsilon_{LUMO}$ & $\langle R^2\rangle$ & ZPVE & $U_0$ & $U$ & $H$ & $G$ & $c_v$  & \multirow{2}*{$\Delta m\%\downarrow$} \\
    \cmidrule(lr){2-12}
    & \multicolumn{11}{c}{MAE $\downarrow$} & \\
    \midrule
    STL & 0.067 & 0.181 & 60.57 & 53.91 & 0.502 & 4.53 & 58.8 & 64.2 & 63.8 & 66.2 & 0.072 & \\
    \midrule
    LS & 0.106 & 0.325 & \textbf{73.57} & 89.67 & 5.19 & 14.06 & 143.4 & 144.2 & 144.6 & 140.3  & 0.128 & 177.6 \\
    SI & 0.309 & 0.345 & 149.8 & 135.7 & \underline{1.00} & \underline{4.50} & 55.3 & 55.75 & 55.82 & 55.27  & 0.112 & 77.8 \\
    RLW \citep{lin2021reasonable} & 0.113 & 0.340 & 76.95 & 92.76 & 5.86 & 15.46 & 156.3 & 157.1 & 157.6 & 153.0  & 0.137 & 203.8 \\
    DWA \citep{liu2019end} & 0.107 & 0.325 & \underline{74.06} & 90.61 & 5.09 & 13.99 & 142.3 & 143.0 & 143.4 & 139.3  & 0.125 & 175.3 \\
    UW \citep{kendall2018multi} & 0.386 & 0.425 & 166.2 & 155.8 & 1.06 & 4.99 & 66.4 & 66.78 & 66.80 & 66.24  & 0.122 & 108.0 \\
    FAMO \citep{liu2024famo} & 0.15 & 0.30 & 94.0 & 95.2 & 1.63 & 4.95 & 70.82 & 71.2 & 71.2 & 70.3 & 0.10 & 58.5 \\
    GO4Align \citep{shen2024go4align} & 0.17 & 0.35 & 102.4 & 119.0 & 1.22 & 4.94 & \underline{53.9} & \underline{54.3} & \underline{54.3} & \underline{53.9} & 0.11 & \underline{52.7} \\
    \midrule
    MGDA \citep{desideri2012multiple} & 0.217 & 0.368 & 126.8 & 104.6 & 3.22 & 5.69 & 88.37 & 89.4 & 89.32 & 88.01  & 0.120 & 120.5 \\
    PCGrad \citep{yu2020gradient} & \underline{0.106} & 0.293 & 75.85 & 88.33 & 3.94 & 9.15 & 116.36 & 116.8 & 117.2 & 114.5  & 0.110 & 125.7 \\
    CAGrad \citep{liu2021conflict} & 0.118 & 0.321 & 83.51 & 94.81 & 3.21 & 6.93 & 113.99 & 114.3 & 114.5 & 112.3  & 0.116 & 112.8 \\
    IMTL-G \citep{liu2021towards} & 0.136 & 0.287 & 98.31 & 93.96 & 1.75 & 5.69 & 101.4 & 102.4 & 102.0 & 100.1  & 0.096 & 77.2 \\
    Nash-MTL \citep{navon2022multi} & \textbf{0.102} & \textbf{0.248} & 82.95 & \textbf{81.89} & 2.42 & 5.38 & 74.5 & 75.02 & 75.10 & 74.16  & \textbf{0.093} & 62.0 \\
    FairGrad \citep{ban2024fair} & 0.117 & \underline{0.253} & 87.57 & \underline{84.00} & 2.15 & 5.07 & 70.89 & 71.17 & 71.21 & 70.88 & \underline{0.095} & 57.9 \\
    \midrule
    LDC-MTL  & 0.23 & 0.29 & 123.89 & 111.95 & \textbf{0.97} & \textbf{3.99} & \textbf{42.73} & \textbf{43.1} & \textbf{43.2} & \textbf{43.1} & 0.097 & \textbf{49.5}$\pm$3.64 \\
    \bottomrule
  \end{tabular}
  \end{adjustbox}
\end{sc}
\end{small}
\end{center}
\end{table*}
\begin{table*}[ht]
\caption{Results on NYU-v2 (3-task) dataset. Each experiment is repeated 3 times with different random seeds, and the average is reported.}
\label{tab:nyuv2}
\vskip 0.15in
\begin{center}
\begin{small}
\begin{sc}
\begin{adjustbox}{max width=\textwidth}
  \begin{tabular}{lllllllllll}
    \toprule
    \multirow{3}*{Method} & \multicolumn{2}{c}{Segmentation} & \multicolumn{2}{c}{Depth} & \multicolumn{5}{c}{Surface Normal} & \multirow{3}*{$\Delta m\%\downarrow$} \\
    \cmidrule(lr){2-3}\cmidrule(lr){4-5}\cmidrule(lr){6-10}
    & \multirow{2}*{mIoU $\uparrow$} & \multirow{2}*{Pix Acc $\uparrow$} & \multirow{2}*{Abs Err $\downarrow$} & \multirow{2}*{Rel Err $\downarrow$} & \multicolumn{2}{c}{Angle Distance $\downarrow$} & \multicolumn{3}{c}{Within $t^\circ$ $\uparrow$} & \\
    \cmidrule(lr){6-7}\cmidrule(lr){8-10}
    & & & & & Mean & Median & 11.25 & 22.5 & 30 & \\
    \midrule
    STL & 38.30 & 63.76 & 0.6754 & 0.2780 & 25.01 & 19.21 & 30.14 & 57.20 & 69.15 & \\
    \midrule
    LS & 39.29 & 65.33 & 0.5493 & 0.2263 & 28.15 & 23.96 & 22.09 & 47.50 & 61.08 & 5.59  \\
    SI & 38.45 & 64.27 & \underline{0.5354} & 0.2201 & 27.60 & 23.37 & 22.53 & 48.57 & 62.32 & 4.39 \\
    RLW \citep{lin2021reasonable} & 37.17 & 63.77 & 0.5759 & 0.2410 & 28.27 & 24.18 & 22.26 & 47.05 & 60.62 & 7.78 \\
    DWA \citep{liu2019end} & 39.11 & 65.31 & 0.5510 & 0.2285 & 27.61 & 23.18 & 24.17 & 50.18 & 62.39 & 3.57 \\
    UW \citep{kendall2018multi} & 36.87 & 63.17 & 0.5446 & 0.2260 & 27.04 & 22.61 & 23.54 & 49.05 & 63.65 & 4.05 \\
    FAMO \citep{liu2024famo} & 38.88 & 64.90 & 0.5474 & 0.2194 & 25.06 & 19.57 & 29.21 & 56.61 & 68.98 & -4.10 \\
    GO4Align \citep{shen2024go4align}  & \textbf{40.42} & 65.37 & 0.5492 & \underline{0.2167} & \underline{24.76} & \textbf{18.94} & \textbf{30.54} & \textbf{57.87} & \textbf{69.84} & \textbf{-6.08}\\
    \midrule
    MGDA \citep{desideri2012multiple} & 30.47 & 59.90 & 0.6070 & 0.2555 & 24.88 & 19.45 & 29.18 & 56.88 & 69.36 & 1.38 \\
    PCGrad \citep{yu2020gradient} & 38.06 & 64.64 & 0.5550 & 0.2325 & 27.41 & 22.80 & 23.86 & 49.83 & 63.14 & 3.97 \\
    GradDrop \citep{chen2020just} & 39.39 & 65.12 & 0.5455 & 0.2279 & 27.48 & 22.96 & 23.38 & 49.44 & 62.87  & 3.58 \\
    CAGrad \citep{liu2021conflict} & 39.79 & 65.49 & 0.5486 & 0.2250 & 26.31 & 21.58 & 25.61 & 52.36 & 65.58 & 0.20 \\
    IMTL-G \citep{liu2021towards} & 39.35 & 65.60 & 0.5426 & 0.2256 & 26.02 & 21.19 & 26.20 & 53.13 & 66.24 & -0.76 \\
    MoCo \citep{fernando2023mitigating} & \underline{40.30} & \textbf{66.07} & 0.5575 & \textbf{0.2135} & 26.67 & 21.83 & 25.61 & 51.78 & 64.85 & 0.16 \\
    Nash-MTL \citep{navon2022multi} & 40.13 & 65.93 & \textbf{0.5261} & 0.2171 & 25.26 & 20.08 & 28.40 & 55.47 & 68.15 & -4.04 \\
    FairGrad \citep{ban2024fair} & 39.74 & \underline{66.01} & 0.5377 & 0.2236 & 24.84 & 19.60 & 29.26 & 56.58 & 69.16 & \underline{-4.66} \\
    \midrule
     LDC-MTL & 38.04 & 38.04 & 0.5402 & 0.2278 & \textbf{24.70} & \underline{19.19} & \underline{29.97} & \underline{57.44} & \underline{69.69} & -4.40$\pm$0.74  \\
    \bottomrule
  \end{tabular}
  \end{adjustbox}
\end{sc}
\end{small}
\end{center}
\vskip -0.1in
\end{table*}
\subsection{Parameter tuning}
In our experiments, the penalty constant $\lambda$ requires some tuning effort, whereas the choice of step size had relatively less impact and did not require extensive tuning. We have included additional results on the CelebA and Cityscapes datasets in \Cref{tab:celeba-tuning} and \Cref{tab:cityscapes-LDC-MTL} that explore the effect of varying $\lambda$, and found that values in the range $\lambda \in [0.02, 0.1]$ consistently yield strong performance. Besides, we use $\Delta m\%$ as the evaluation metric, which aggregates performance across all tasks based on their definitions. As a result, it may exhibit slightly higher variance. This behavior is consistent with other experiments; for example, similar variance patterns can be observed in Table 5 in \cite{xiao2024direction}.

\begin{table*}[ht]
\caption{Additional results on Cityscapes (2-task) dataset with different $\lambda$ values.}
\label{tab:cityscapes-LDC-MTL}
\vskip 0.15in
\begin{center}
\begin{small}
\begin{sc}
\begin{adjustbox}{max
width=\textwidth}
  \begin{tabular}{llllll}
    \toprule
    \multirow{2}*{Method} & \multicolumn{2}{c}{Segmentation} & \multicolumn{2}{c}{Depth} &
    \multirow{2}*{$\Delta m\%\downarrow$} \\
    \cmidrule(lr){2-3}\cmidrule(lr){4-5}
    & mIoU $\uparrow$ & Pix Acc $\uparrow$ & Abs Err $\downarrow$ & Rel Err $\downarrow$ & \\
    \midrule
    STL & 74.01 & 93.16 & 0.0125 & 27.77 & \\
    \midrule
    FairGrad \citep{ban2024fair} & 75.72 & 93.68 & 0.0134 & 32.25 & 5.18\\
    \midrule
    LDC-MTL ({\scriptsize $\tau=\bf 1$, $\lambda=0.02$})& 73.18 & 92.78 & 0.0124 & 29.67 & \textbf{1.96}$\pm$1.25\\
    LDC-MTL ({\scriptsize $\tau=\bf 1$, $\lambda=0.05$})& 74.50 & 93.40 & 0.0124 & 28.99 & \textbf{0.79}$\pm$1.10\\
    LDC-MTL ({\scriptsize $\tau=\bf 1$, $\lambda=0.06$})& 74.84 & 93.43 & 0.0123  & 29.61 & \textbf{0.92}$\pm$0.95 \\
    LDC-MTL ({\scriptsize $\tau=\bf 1$, $\lambda=0.07$})& 75.40 & 93.42 & 0.0125 & 29.25 & \textbf{0.88}$\pm$1.01\\
    LDC-MTL ({\scriptsize $\tau=\bf 1$, $\lambda=0.08$})& 75.34 & 93.35 & 0.0127 & 29.70 & \textbf{1.64}$\pm$1.04\\
    LDC-MTL ({\scriptsize $\tau=\bf 1$, $\lambda=0.09$})& 74.97 & 93.50 & 0.0123 & 28.90 & \textbf{0.18}$\pm$0.96 \\
    LDC-MTL ({\scriptsize $\tau=\bf 1$, $\lambda=0.1$}) & 74.53 & 93.42 & 0.0128 & 26.79 & -\textbf{0.57}$\pm$1.17\\
    \bottomrule
  \end{tabular}
  \end{adjustbox}
\end{sc}
\end{small}
\end{center}
\vskip -0.1in
\end{table*}
\subsection{Loss discrepancy and gradient conflict}
To demonstrate the effectiveness of our bilevel formulation for loss discrepancy control, we conduct a detailed analysis of the loss distribution on the CelebA dataset, comparing linear scalarization (LS) with our proposed method. As shown in \Cref{fig:loss_discrepancy} (left) and statistics in \Cref{tab:loss-stats}, the distribution of all 40 task-specific losses reveals that our approach yields more concentrated and consistently lower values.

Except for the loss discrepancy, we randomly select 8 out of 40 tasks and have checked the gradient cosine similarity among tasks on the CelebA dataset. \Cref{fig:loss_discrepancy} (right) illustrates the cosine similarities of task gradients after the 15th epoch, which shows that the gradient conflict is mitigated.

\begin{figure}[!t]
\centering
\includegraphics[width=\linewidth]{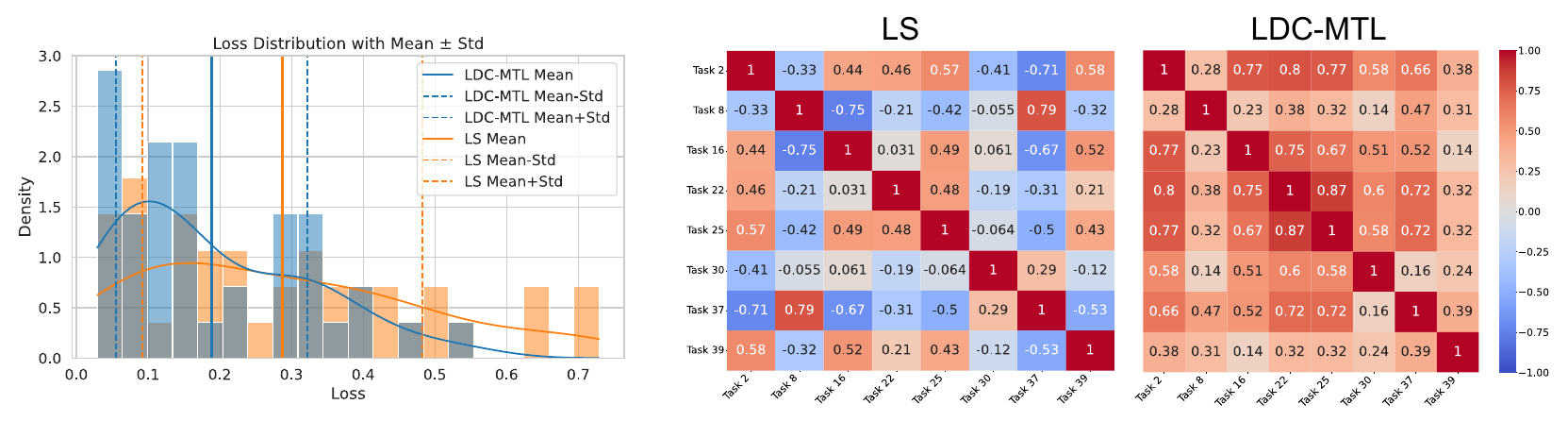}
\caption{Left: comparison of loss distributions for LDC-MTL and LS on the CelebA dataset. Right: cosine similarities of task gradients for LS and LDC-MTL, respectively; LDC-MTL exhibits much higher gradient similarity among tasks, suggesting reduced gradient conflict.}\label{fig:loss_discrepancy}
\end{figure}

\subsection{Comparison with weight-swept LS}
We have operated a weight sweep on linear scalarization, and the result is shown in \Cref{tab:cityscapes-ls}. From this table, we find that even after a careful weight sweep, LS does not perform better
than our method. Moreover, we have also provided a scatter plot in \Cref{fig:PF+weightchange} (left) using results in \Cref{tab:cityscapes-LDC-MTL} and \Cref{tab:cityscapes-ls} following Figure 8 in \cite{xin2022current}. In both cases, our method forms the Pareto frontier.
\begin{table*}[ht]
\caption{Additional results on Cityscapes (2-task) dataset with different weights of LS.}
\label{tab:cityscapes-ls}
\vskip 0.15in
\begin{center}
\begin{small}
\begin{sc}
\begin{adjustbox}{max
width=\textwidth}
  \begin{tabular}{llllll}
    \toprule
    \multirow{2}*{Method} & \multicolumn{2}{c}{Segmentation} & \multicolumn{2}{c}{Depth} &
    \multirow{2}*{$\Delta m\%\downarrow$} \\
    \cmidrule(lr){2-3}\cmidrule(lr){4-5}
    & mIoU $\uparrow$ & Pix Acc $\uparrow$ & Abs Err $\downarrow$ & Rel Err $\downarrow$ & \\
    \midrule
    STL & 74.01 & 93.16 & 0.0125 & 27.77 & \\
    \midrule
    LDC-MTL ({\scriptsize $\tau=\bf 1$, $\lambda=0.1$}) & 74.53 & 93.42 & 0.0128 & 26.79 & -\textbf{0.57}$\pm$1.17\\
    \midrule
    LS ({\scriptsize$w_1=0.1, w_2=0.9$}) & 74.00 &  92.92 & 0.0144 & 29.23 &  5.13$\pm$1.10\\
    LS ({\scriptsize$w_1=0.2, w_2=0.8$}) & 75.09 & 93.58 & 0.0136 & 33.73 & 7.18$\pm$0.99\\
    LS ({\scriptsize$w_1=0.3, w_2=0.7$}) & 74.10 & 93.08 & 0.0153 & 35.38 & 12.38$\pm$1.78\\
    LS ({\scriptsize$w_1=0.4, w_2=0.6$}) & 74.95 & 93.43 & 0.0175 & 43.15 & 23.47$\pm$1.77\\
    LS ({\scriptsize$w_1=0.6, w_2=0.4$}) & 74.48 & 93.39 & 0.0186 & 43.93 & 26.53$\pm$1.77\\
    LS ({\scriptsize$w_1=0.8, w_2=0.2$}) & 74.24 & 93.15 & 0.0192 & 61.24 & 43.19$\pm$6.50\\
    \bottomrule
  \end{tabular}
  \end{adjustbox}
\end{sc}
\end{small}
\end{center}
\vskip -0.1in
\end{table*}

\subsection{Ablation study}

\textbf{Task orders.} To investigate the impact of loss ordering in the upper-level function $f(W,x^*)$ in \cref{eq:object1}, we randomly shuffled the order of loss values before computing the (weighted) loss gaps on the CelebA dataset. The results are reported in \Cref{tab:celeba-order} where CelebA{\scriptsize(reorder)} represents random reordering on task losses before computing $f(W,x)$. Our findings indicate that reordering the losses does not lead to significant performance differences. This suggests that the effect of task ordering is minimal, likely because the loss values are already on a comparable scale.
\begin{table}[!ht]
\centering
\caption{Additional result on the loss orders with CelebA.}
\label{tab:celeba-order}
\begin{tabular}{lc}
\toprule
\textbf{Dataset}  & $\Delta m\%$ \\
\midrule
CelebA     & -1.31 \\
CelebA{\scriptsize (reorder)}, seed=0   & -1.30 \\
CelebA{\scriptsize (reorder)}, seed=1   & -1.00 \\
CelebA{\scriptsize (reorder)}, seed=2   & -1.62 \\
\bottomrule
\end{tabular}
\end{table}
\section{Additional information}
Here, we present the complete version of the double-loop algorithm for solving the penalized bilevel problem, $\mathcal{BP}_\lambda$ in \cref{eq:penalty}, as detailed in Algorithm~\ref{algorithm1}. Notably, the local or global solution of $\mathcal{BP}_\lambda$ obtained by Algorithm~\ref{algorithm1} also serves as a local or global solution to $\mathcal{BP}_\epsilon$, as established by Proposition 2 in \cite{shen2023penalty}.
\refstepcounter{mymalgo}
\begin{tcolorbox}[
    title=Algorithm \themymalgo: Double-loop First-order Method,
    colback=white,
    colframe=gray,
    boxrule=0.3pt,
    fonttitle=\bfseries,
    width=\textwidth,
    left=6pt, right=6pt, top=6pt, bottom=6pt
]
\label{algorithm1}
\small
\textbf{Initialize:} $W^0, x^0, z_0^0$ \\
\textbf{for} $t = 0, 1, \dots, T{-}1$ \textbf{do} \\
\hspace*{1.5em} Warm start: $z_0^t = x^t$ \\
\hspace*{1.5em} \textbf{for} $n = 0, 1, \dots, N$ \textbf{do} \\
\hspace*{3em} $z_{n+1}^t = z_n^t - \beta \lambda \nabla_z g(W^t, z_n^t)$ \\
\hspace*{1.5em} \textbf{end for} \\
\hspace*{1.5em} $x^{t+1} = x^t - \alpha \left( \nabla_x f(W^t, x^t) + \lambda \nabla_x g(W^t, x^t) \right)$ \\
\hspace*{1.5em} $W^{t+1} = W^t - \alpha \left( \nabla_W f(W^t, x^t) + \lambda \left( \nabla_W g(W^t, x^t) - \nabla_W g(W^t, z_N^t) \right) \right)$ \\
\textbf{end for}
\end{tcolorbox}

\section{Analysis}
In the analysis, we need the following definitions.
\begin{align}
    x_t^*=&\arg\min_{x}g(W^t,x),\nonumber\\
    G(x)=&[\nabla l_1(x),\nabla l_2(x),...,\nabla l_K(x) ]\nonumber\\
    F(\theta^t)=&f(\theta^t)+\lambda p(\theta^t),    \Phi(\theta^t)=f(\theta^t)+\lambda g(\theta^t),\nonumber\\
    \text{where}\;\theta^t=&(W^t,x^t), p(\theta^t)=g(W^t,x^t)-g(W^t,x_t^*)\nonumber\\
    \nabla f(W,x)=&(\nabla_Wf(W,x),\nabla_xf(W,x)), \nabla g(W,x)=(\nabla_Wg(W,x),\nabla_xg(W,x)).
\end{align}
\begin{lemma}\label{lemma:fullversionofpareto}
Let $(W,x)$ be a solution to the $\mathcal{BP}_\epsilon$. This point is also an $\epsilon$-accurate Pareto stationarity point for $\{l_i(x)\}$ satisfying  $$\min_{w\in\mathcal{W}}\|G(x)w\|^2=\mathcal{O}(\epsilon).$$
\end{lemma}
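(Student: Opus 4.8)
The plan is to exploit the fact that the lower-level gradient $\nabla_x g(W,x)$ is itself a simplex-weighted combination of the individual task gradients, so that controlling its norm automatically controls the Pareto-stationarity measure. Concretely, since $g(W,x)=\sum_{i=1}^K \sigma_i(W) l_i(x)$, we have $\nabla_x g(W,x)=\sum_{i=1}^K \sigma_i(W)\nabla l_i(x)=G(x)\sigma(W)$, and because $\sigma(\cdot)$ is produced by a softmax output layer, its output satisfies $\sigma(W)\in\mathcal{W}$, the probability simplex used in the definition of Pareto stationarity. Hence $\min_{w\in\mathcal{W}}\|G(x)w\|^2\le \|G(x)\sigma(W)\|^2=\|\nabla_x g(W,x)\|^2$, and it suffices to show $\|\nabla_x g(W,x)\|^2=\mathcal{O}(\epsilon)$.

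First I would use feasibility. Since $(W,x)$ solves $\mathcal{BP}_\epsilon$, it is in particular feasible, so $g(W,x)-g(W,x^*)\le\epsilon$, where $x^*=\arg\min_x g(W,x)$. Next I would convert this function-value gap into a gradient-norm bound via the smoothness of $g(W,\cdot)$ from Assumption~\ref{ass:lipschitz}. Applying the descent lemma at the point $x-\tfrac{1}{L_g}\nabla_x g(W,x)$ and using that $x^*$ is a global minimizer of $g(W,\cdot)$ yields $g(W,x^*)\le g(W,x)-\tfrac{1}{2L_g}\|\nabla_x g(W,x)\|^2$, i.e. $\|\nabla_x g(W,x)\|^2\le 2L_g\,(g(W,x)-g(W,x^*))\le 2L_g\epsilon$. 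Combining with the display above gives $\min_{w\in\mathcal{W}}\|G(x)w\|^2\le 2L_g\epsilon=\mathcal{O}(\epsilon)$, which is exactly the claimed $\epsilon$-accurate Pareto stationarity.

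The argument is short, and notably it uses only the smoothness half of Assumption~\ref{ass:lipschitz}; the PL condition in Assumption~\ref{ass:pl} is not required for this lemma (it will instead be needed to certify closeness of the iterates to $x^*$ in the convergence analysis of Theorem~\ref{theorem:2}). The only points that need care are (i) verifying that the range of the softmax routing map $\sigma$ coincides with the weight set $\mathcal{W}$ appearing in the Pareto-stationarity definition, so that $\sigma(W)$ is an admissible competitor in the inner minimization; and (ii) justifying the descent-lemma step cleanly, in particular that the quadratic upper bound is minimized at the gradient-step point and that $g(W,x^*)$ lower-bounds its value because $x^*$ is a global (not merely local) minimizer of $g(W,\cdot)$. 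I do not anticipate a substantive obstacle beyond these bookkeeping checks.
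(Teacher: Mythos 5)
Your proposal is correct and follows essentially the same route as the paper's proof: use feasibility of $(W,x)$ for $\mathcal{BP}_\epsilon$ to get $g(W,x)-g(W,x^*)\leq\epsilon$, convert this value gap into $\|\nabla_x g(W,x)\|^2=\|G(x)\sigma(W)\|^2\leq 2L_g\epsilon$, and conclude via $\sigma(W)\in\mathcal{W}$. The only (minor, favorable) difference is that you derive the key inequality from the descent lemma applied at $x-\tfrac{1}{L_g}\nabla_x g(W,x)$ together with global optimality of $x^*$, which holds for any $L_g$-smooth function, whereas the paper invokes a co-coercivity-type lower bound that in general requires convexity of $g(W,\cdot)$; both yield the same constant $2L_g$.
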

\begin{proof}
According to the definition of $\mathcal{BP}_\epsilon$, its solution $(W,x)$ satisfies that
\begin{align}\label{eq:gepsilon}
g(W,x)-g(W,x^*)\leq\epsilon.
\end{align}
Further, according to Assumption \ref{ass:lipschitz}, we can obtain
\begin{align*}
    g(W,x)\geq g(W,x^*)+\nabla_xg(W,x^*)(x-x^*)+\frac{1}{2L_g}\|\nabla_xg(W,x)-\nabla_xg(W,x^*)\|^2.
\end{align*}
Since $x^*\in\arg\min_xg(W,x)$ and $g(W,x)=\sum_{i=1}^K\sigma_i(W){l}_i(x)$, we have $\nabla_xg(W,x^*)=0$ and $\nabla_xg(W,x)=\sum_{i=1}^K\sigma_i(W)\nabla _x{l}_i(x)={G}(x)\sigma(W)$ . We can obtain,
\begin{align}\label{eq:normalizedPareto}
\|{G}(x)\sigma(W)\|^2\leq2L_g(g(W,x)-g(W,x^*))=\mathcal{O}(\epsilon),
\end{align}
where the last inequality follows from \cref{eq:gepsilon}.  Furthermore, since we have used softmax at the last layer of our neural network, $\sigma(W)$ belongs to the probability simplex $\mathcal{W}$. Thus, we can derive 
$$\min_{w\in\mathcal{W}}\|{G}(x)w\|^2\leq\|{G}(x)\sigma(W)\|^2=\mathcal{O}(\epsilon).$$
Thus, the solution $(W,x)$ to the $\mathcal{BP}_\epsilon$ also satisfies Pareto stationarity of the
loss functions $\{l_i\}$

\end{proof}
\begin{theorem}[Restatement of \Cref{theorem:2}]\label{theorem:fullversionof2}
Suppose Assumptions \ref{ass:lipschitz}-\ref{ass:pl} are satisfied. Select hyperparameters 
\begin{align}
\alpha\in(0,\frac{1}{L_f+\lambda(2L_g+L_g^2\mu)}],\; \beta\in(0,\frac{1}{L_g}],\lambda= L\sqrt{3\mu{\epsilon}^{-1}},\text{ and } N=\Omega(\log(\alpha t)).\nonumber
\end{align}

(i) Our method with the updates \cref{WgUpdates} and \cref{eq:zupdate} (i.e., Algorithm~\ref{algorithm1} in the appendix) finds an $\epsilon$-accurate stationary point of the problem $\mathcal{BP}_\lambda$. If this stationary point is a local/global solution to $\mathcal{BP}_\lambda$, it is also a local/global solution to $\mathcal{BP}_\epsilon$. Furthermore, it is also an $\epsilon$-accurate Pareto 
stationary point for loss functions $l_i(x),i=1,...,K.$

(ii) Moreover, if $\|\nabla_Wg(W^t,z_N^t)\|=\mathcal{O}(\epsilon)$ for $t=1,...,T$. The simplified method in Algorithm~\ref{algorithm2} also achieves the same convergence guarantee as that in $(i)$.
\end{theorem}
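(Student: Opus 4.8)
The plan is to read Algorithm~\ref{algorithm1} as inexact gradient descent on the penalized objective $F(\theta)=f(\theta)+\lambda p(\theta)$ of $\mathcal{BP}_\lambda$, where $\theta=(W,x)$ and $p(\theta)=g(W,x)-g(W,x^*(W))$, and then to route its $\epsilon$-stationarity through the PL condition of \Cref{ass:pl} to obtain both feasibility for $\mathcal{BP}_\epsilon$ and Pareto stationarity. The preparatory step is to establish smoothness of $F$. Using the envelope identity $\nabla_W[g(W,x^*(W))]=\nabla_W g(W,x^*(W))$, valid since $\nabla_x g(W,x^*)=0$, together with \Cref{ass:lipschitz}--\Cref{ass:pl}, I would argue that the PL condition forces $x^*(W)$ to vary Lipschitz-continuously in $W$ with constant $L_g\mu$, so that the value function $v(W)=g(W,x^*(W))$ is $(L_g+L_g^2\mu)$-smooth. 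Combining this with the $L_g$-smoothness of $g(W,x)$ and the $L_f$-smoothness of $f$ gives $p$ smoothness constant $2L_g+L_g^2\mu$ and hence the overall constant $L_F=L_f+\lambda(2L_g+L_g^2\mu)$, which is exactly the reciprocal of the prescribed bound on $\alpha$.

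For part (i) I would proceed in three moves. First, apply the descent lemma for the $L_F$-smooth $F$ to the update $\theta^{t+1}=\theta^t-\alpha\,\widehat{\nabla}F(\theta^t)$, where $\widehat{\nabla}F$ differs from $\nabla F$ only in the $W$-block by replacing $x_t^*$ with the inner-loop iterate $z_N^t$ (the $x$-update is exact gradient descent on $F$ since $\nabla_x p=\nabla_x g$). Writing $\widehat{\nabla}F=\nabla F+e^t$ with $\|e^t\|\le \lambda L_g\|z_N^t-x_t^*\|$, the PL condition yields linear contraction of the inner loop, so the warm start $z_0^t=x^t$ and the choice $N=\Omega(\log(\alpha t))$ shrink $\|e^t\|$ fast enough that telescoping gives $\frac{1}{T}\sum_t\|\nabla F(\theta^t)\|^2=\mathcal{O}(1/(\alpha T))+\mathcal{O}(\epsilon)$, i.e.\ an $\epsilon$-accurate stationary point of $\mathcal{BP}_\lambda$ for $T$ large. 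Second, at an iterate with $\|\nabla F\|^2=\mathcal{O}(\epsilon)$, I use $\nabla_x F=\nabla_x f+\lambda\nabla_x g$ and the $L$-Lipschitz bound on $f$ to get $\|\nabla_x g\|\le (L+\mathcal{O}(\sqrt\epsilon))/\lambda$; substituting $\lambda=L\sqrt{3\mu\epsilon^{-1}}$ and invoking the PL inequality $g-g^*\le\mu\|\nabla_x g\|^2$ yields $g(W,x)-g(W,x^*)\le \epsilon/3\le\epsilon$, so the iterate is feasible for $\mathcal{BP}_\epsilon$. Third, feeding this feasibility into \Cref{lemma:fullversionofpareto} delivers $\min_{w\in\mathcal{W}}\|G(x)w\|^2=\mathcal{O}(\epsilon)$, the claimed Pareto stationarity, while the local/global equivalence between $\mathcal{BP}_\lambda$ and $\mathcal{BP}_\epsilon$ is inherited from Proposition~2 of \cite{shen2023penalty} under the same $\lambda$.

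For part (ii) the only change is that Algorithm~\ref{algorithm2} drops the term $\lambda\nabla_W g(W^t,z_N^t)$ entirely, which enlarges the gradient error by exactly this dropped term. Under the stated condition $\|\nabla_W g(W^t,z_N^t)\|=\mathcal{O}(\epsilon)$, its contribution to $\|e^t\|^2$ is $\lambda^2\,\mathcal{O}(\epsilon^2)=\mathcal{O}(\epsilon^{-1})\cdot\mathcal{O}(\epsilon^2)=\mathcal{O}(\epsilon)$, precisely because $\lambda=\Theta(\epsilon^{-1/2})$. Since this matches the error budget already absorbed in part (i), the identical telescoping argument returns an $\epsilon$-accurate stationary point of $\mathcal{BP}_\lambda$, and the feasibility and Pareto conclusions then follow verbatim.

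The main obstacle is the smoothness analysis of the penalized objective, specifically pinning down the Lipschitz behavior of $x^*(W)$ so that the value-function smoothness reproduces the exact constant $2L_g+L_g^2\mu$ in the step-size condition; this is where \Cref{ass:pl} does the heavy lifting, since without strong convexity one cannot invoke the implicit function theorem directly and must instead argue through the quadratic-growth consequence of PL and the envelope identity. A secondary delicate point is the bookkeeping of the large penalty $\lambda=\Theta(\epsilon^{-1/2})$: every approximation error is amplified by $\lambda$, so both the inner-loop accuracy (through $N$) and the dropped-gradient magnitude in part (ii) must be controlled at the $\mathcal{O}(\epsilon)$ level, not merely $\mathcal{O}(\sqrt\epsilon)$, in order to keep $\lambda^2\|e^t\|^2=\mathcal{O}(\epsilon)$.
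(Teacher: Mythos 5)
Your proposal is sound and lands on the same core ingredients as the paper's proof, but the two arguments are organized differently, and the comparison is worth spelling out. For part (i) the paper does not re-derive the descent analysis at all: it cites Theorem~3 of \cite{shen2023penalty} for the $\epsilon$-stationarity of $\mathcal{BP}_\lambda$ and Proposition~2 therein for the implication ``local/global solution of $\mathcal{BP}_\lambda$ $\Rightarrow$ feasible local/global solution of $\mathcal{BP}_\epsilon$,'' then finishes with \Cref{lemma:fullversionofpareto}. You instead reconstruct the inexact-gradient-descent argument on $F=f+\lambda p$ from scratch, including the smoothness constant $L_f+\lambda(2L_g+L_g^2\mu)$ and a direct feasibility derivation ($\|\nabla_x F\|^2=\mathcal{O}(\epsilon)$ plus $\|\nabla_x f\|\le L$ gives $\|\nabla_x g\|\lesssim L/\lambda$, and PL with $\lambda=L\sqrt{3\mu\epsilon^{-1}}$ gives $g-g^*\le\epsilon/3$); this is more self-contained and correctly explains where the constant $3\mu$ in $\lambda$ comes from, which the paper leaves implicit. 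For part (ii) the paper's route is slightly different in framing: rather than treating Algorithm~2 as inexact descent on $F$, it bounds the target quantity $\|\nabla f+\lambda(\nabla g(W^t,x^t)-\nabla g(W^t,x_t^*))\|^2$ by Young's inequality into three terms and applies the descent lemma to $\Phi=f+\lambda g$ — the function Algorithm~2 \emph{exactly} descends — so no smoothness of the value function is needed in that step; the error terms $4\lambda^2\|\nabla_W g(W^t,x_t^*)-\nabla_W g(W^t,z_N^t)\|^2$ and $4\lambda^2\|\nabla_W g(W^t,z_N^t)\|^2=\mathcal{O}(\lambda^2\epsilon^2)=\mathcal{O}(\epsilon)$ are then controlled exactly as in your bookkeeping. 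The two framings are equivalent in substance, and your $\lambda^2\|e^t\|^2=\mathcal{O}(\epsilon)$ accounting matches the paper's.

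One caution on the step you yourself flag as the main obstacle: under the PL condition of \Cref{ass:pl} the lower-level minimizer need not be unique, so ``$x^*(W)$ varies $L_g\mu$-Lipschitz-continuously in $W$'' is not immediate — you would have to argue via the set-valued solution map (quadratic growth from PL plus an error-bound/Hausdorff-distance argument) or work with the value function $v(W)=\min_x g(W,x)$ directly through a Danskin-type identity. The paper sidesteps this entirely by citing \cite{shen2023penalty} for part (i) and by using $\Phi$ rather than $F$ in part (ii), so if you want a fully self-contained write-up you would need to supply that lemma; as a reconstruction of the intended argument, however, your proposal is faithful.
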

\begin{proof}
We start with the first half of our theorem. Directly from Theorem 3 in \cite{shen2023penalty}, Algorithm~\ref{algorithm1} achieves an $\epsilon$-accurate stationary point of $\mathcal{BP}_\lambda$ with $\widetilde{\mathcal{O}}(\epsilon^{-1.5})$ iterations such that
\begin{align*}
    \frac{1}{T}\sum_{t=0}^{T-1}\|\nabla f(W^t,x^t)+\lambda(\nabla g(W^t,x^t)-\nabla g(W^t,x_t^*))\|^2\leq\frac{F(W^0,x^0)}{\alpha T}+\frac{10L^2L_g^2}{T}=\mathcal{O}(\epsilon).
\end{align*}
Recall that $F(W^0,x^0)=f(W^0,x^0)+\lambda(g(W^0,x^0)-g(W^0,x_0^*))$. According to the Proposition 2 in \cite{shen2023penalty} by setting $\delta=\epsilon$ therein, we can have $g(W^T,x^T)-g(W^T,x_T^*)\leq\epsilon$ if this stationary point is local/global solution to $\mathcal{BP}_\lambda$. Then by using \Cref{lemma:fullversionofpareto}, we know that this $\epsilon$-accurate stationary point is also an $\epsilon$-accurate Pareto stationary point of loss functions $\{{l}_i(x)\}$ satisfying  
$$\min_{w\in\mathcal{W}}\|G(x^T)w\|^2=\mathcal{O}(\epsilon).$$
The proof of the first half of our theorem is complete.

Then, for the second half, since we have built the connection between the stationarity of $\mathcal{BP}_\lambda$ and Pareto stationarity, we prove that the single-loop Algorithm~\ref{algorithm2} achieves an $\epsilon$-accurate stationary point of $\mathcal{BP}_\lambda$. Recall that
\begin{align}\label{eq:hypergradient}
\|\nabla f(W^t,&x^t)+\lambda(\nabla g(W^t,x^t)-\nabla g(W^t,x_t^*))\|^2\nonumber\\
\overset{(i)}{\leq}&2\|\nabla f(W^t,x^t)+\lambda\nabla g(W^t,x^t)\|^2+2\lambda^2\|\nabla g(W^t,x_t^*)\|^2\nonumber\\
\overset{(ii)}{=}&2\|\nabla f(W^t,x^t)+\lambda\nabla g(W^t,x^t)\|^2+2\lambda^2\|\nabla_W g(W^t,x_t^*)\|^2\nonumber\\
\overset{(iii)}{\leq}&2\|\nabla f(W^t,x^t)+\lambda\nabla g(W^t,x^t)\|^2+4\lambda^2\|\nabla_W g(W^t,x^*_t)-\nabla_W g(W^t,z^t_N)\|^2\nonumber\\
&+4\lambda^2\|\nabla_Wg(W^t,z^t_N)\|^2,
\end{align}
where $(i)$ and $(iii)$ both follow from Young's inequality, and $(ii)$ follows from $\nabla_xg(W^t,x_t^*)=0$. Besides, recall that $z_N^t$ is the intermediate output of the subloop in Algorithm~\ref{algorithm1}. We next provide the upper bounds of the above three terms on the right-hand side (RHS). For the first term, we utilize the smoothness of $\Phi(\theta^t)=\nabla f(\theta^t)+\lambda \nabla g(\theta^t)$ where $L_\Phi=L_f+\lambda L_g$ and $\theta^t=(W^t,x^t)$.
\begin{align*} \Phi(\theta^{t+1})\leq& \Phi(\theta^t)+\langle\nabla\Phi(\theta^t), \theta^{t+1}-\theta^t\rangle+\frac{L_\Phi}{2}\|\theta^{t+1}-\theta^t\|^2\nonumber\\
\overset{(i)}{\leq}&\Phi(\theta^t)-\frac{\alpha}{2}\|\nabla \Phi(\theta^t)\|^2,
\end{align*}
where $(i)$ follows from $\alpha\leq\frac{1}{L_\Phi}=\mathcal{O}(\lambda^{-1})$. Thus, we can obtain
\begin{align}\label{eq:hypergradient-1}
\|\nabla\Phi(\theta^t)\|^2\leq\frac{2}{\alpha}(\Phi(\theta^t)-\Phi(\theta^{t+1})).
\end{align}
Then for the second term on the RHS in \cref{eq:hypergradient}, we follow the same step in the proof of Theorem 3 in \cite{shen2023penalty} and obtain
\begin{align}\label{eq:hypergradient-2}
4\lambda^2\|\nabla_W &g(W^t,x^*_t)-\nabla_Wg(W^t,z_N^t)\|^2\nonumber\\
\leq&4\lambda^2L_g^2\mu\Big(1-\frac{\beta}{2\mu}\Big)^N(g(W^t,x^t)-g(W^t,x^*_t))\nonumber\\
\overset{(i)}{\leq}&4\lambda^2L_g^2\Big(1-\frac{\beta}{2\mu}\Big)^N\|\nabla_xg(W^t,x^t)\|^2\nonumber\\
=&4\lambda^2L_g^2\Big(1-\frac{\beta}{2\mu}\Big)^N\Big\|\frac{x^{t+1}-x^t+\alpha\nabla_xf(W^t,x^t)}{\alpha \lambda}\Big\|^2\nonumber\\
\overset{(ii)}{\leq}&8\lambda^2L_g^2\Big(1-\frac{\beta}{2\mu}\Big)^N\Big(\frac{\|\theta^{t+1}-\theta^t\|^2}{\alpha^2\lambda^2}+\frac{L^2}{\lambda^2}\Big)\nonumber\\
\overset{(iii)}{\leq}&\frac{1}{2\alpha^2}\|\theta^{t+1}-\theta^t\|^2+\frac{2L^2L_g^2}{\alpha^2t^2}\nonumber\\
=&\frac{1}{2}\|\nabla\Phi(\theta^t)\|^2+\frac{2L^2L_g^2}{\alpha^2t^2},
\end{align}
where $(i)$ follows from the PL condition, $(ii)$ follows from Young's inequality and Assumption \ref{ass:lipschitz}, and $(iii)$ follows from the selection on $N\geq\max\{-\log_{c_\beta}(16L_g^2),-2\log_{c_\beta}(2\alpha t)\}$ with $c_\beta=1-\frac{\beta}{2\mu}$. Lastly, for the last term at the RHS in \cref{eq:hypergradient}, we have,
\begin{align}\label{eq:hypergradient-3}
4\lambda^2\|\nabla_Wg(W^t,z^t_N)\|^2=\mathcal{O}(\lambda^2\epsilon^2),
\end{align}
where this inequality follows from our experimental observation. Furthermore, substituting \cref{eq:hypergradient-1}, and \cref{eq:hypergradient-2} into \cref{eq:hypergradient} yields
\begin{align}
\|\nabla f(W^t,x^t)&+\lambda(\nabla g(W^t,x^t)-\nabla g(W^t,x^*_t))\|^2\nonumber\\
\leq&\frac{5}{2}\|\nabla\Phi(\theta^t)\|^2+\frac{2L^2L_g^2}{\alpha^2t^2}+4\lambda^2\|\nabla_Wg(W^t,z^t_N)\|^2\nonumber\\
\leq&\frac{5}{\alpha}(\Phi(\theta^t)-\Phi(\theta^{t+1}))+\frac{2L^2L_g^2}{\alpha^2t^2}+4\lambda^2\|\nabla_Wg(W^t,z^t_N)\|^2.
\end{align}
Therefore, telescoping the above inequality yields,
\begin{align*}
\frac{1}{T}\sum_{t=0}^{T-1}&\|\nabla f(W^t,x^t)+\lambda(\nabla g(W^t,x^t)-\nabla g(W^t,x^*_t))\|^2\nonumber\\
=&\mathcal{O}(\frac{\lambda}{\alpha T}+\frac{1}{\alpha^2 T}+\lambda^2\epsilon^2).
\end{align*}
According to the parameter selection that $\lambda=\mathcal{O}(\epsilon^{-\frac{1}{2}})$, $\alpha=\mathcal{O}(\epsilon^{\frac{1}{2}})$, and $T=\mathcal{O}(\epsilon^{-2})$, we can obtain
\begin{align*}
\frac{1}{T}\sum_{t=0}^{T-1}\|\nabla f(W^t,x^t)+\lambda(\nabla g(W^t,x^t)-\nabla g(W^t,x^*_t))\|^2=\mathcal{O}(\epsilon).
\end{align*}
Therefore, Algorithm~\ref{algorithm2} can achieve a stationary point of $\mathcal{BP}_\lambda$ with $\mathcal{O}(\epsilon^{-2})$ iterations. If this stationary point is a local/global solution to $\mathcal{BP}_\lambda$, it is also a solution to $\mathcal{BP}_\epsilon$ according to Proposition 2 in \cite{shen2023penalty}. Then, by using \Cref{lemma:fullversionofpareto}, we know this stationary point is also an $\epsilon$-accurate Pareto stationary point of the original loss functions. The proof is complete.
\end{proof}
\end{document}